\DeclareMathOperator{\cl}{Cl}
\DeclareMathOperator{\mv}{Mv}
\newcommand{\powerset}{\mathcal{P}}
\newcommand{\modelM}{\mathcal{M}}
\newcommand{\lr}[1]{\langle #1 \rangle}
\newcommand{\llrr}[1]{\{ #1 \}}
\newcommand{\ol}{\overline}
\newcommand{\sep}{\,|\,}
\newcommand{\Kf}{\ensuremath{K\!\!f}}
\newcommand{\Kv}{\ensuremath{K\!v}}
\newcommand{\BA}{\ensuremath{\mathbf{A}}}
\newcommand{\BG}{\ensuremath{\mathbf{G}}}
\newcommand{\BF}{\ensuremath{\mathbf{F}}}
\newcommand{\BP}{\ensuremath{\mathbf{P}}}
\newcommand{\BQ}{\ensuremath{\mathbf{Q}}}
\newcommand{\axiomLKVF}{\ensuremath{\mathbb{LKVF}}}
\newcommand{\logicLKVF}{\ensuremath{\mathbf{LKVF}}}
\newcommand{\LRA}{\Leftrightarrow}
\newcommand{\MP}{\ensuremath{\mathbb{MP}}}
\newcommand{\NEC}{\ensuremath{\mathbb{NEC}}}
\newcommand{\Gbkv}{\Kv_{\Gamma}}
\newcommand{\Gbk}{K_{\Gamma}}
\newcommand{\hs}{\mathfrak{h}}
\newcommand{\lat}{\mathfrak{L}}
\newcommand{\ttt}{\mathtt}
\newcommand{\Lmax}{\textrm{Lmax}}
\title{Epistemic Logic with Functional Dependency Operator}
\author{Yifeng Ding}
\affil{Group in Logic and the Methodology of Science, UC Berkeley.
  \thanks{  The author would like to give special thanks to
  Malvin Gattinger and Wesley H. Holliday
  for their unreserved helpful comments.}}
\affil[]{yf.ding@berkeley.edu}
\begin{document}

\maketitle
\abstract{
  Epistemic logic with non-standard knowledge operators,
  especially the ``knowing-value'' operator,
  has recently gathered much attention.
  With the ``knowing-value'' operator,
  we can express knowledge of individual variables,
  but not of the relations between them in general.
  In this paper,
  we propose a new operator \Kf{}
  to express knowledge of the functional dependencies between variables.
  The semantics of this $\Kf$ operator
  uses a function domain
  which imposes a constraint on what counts as a functional dependency relation.
  By adjusting this function domain,
  different interesting logics arise,
  and in this paper we axiomatize three such logics in a single agent setting.
  Then we show how these three logics can be unified
  by allowing the function domain to vary relative to different agents and possible worlds.
  A multiagent axiomatization is given in this case.}
\section{Introduction}

\textit{De re} knowledge or in general non-standard knowledge in epistemic logic is attracting continuing attention.
This line of research started from the very beginning of epistemic logic:
Hintikka discussed a ``knowing-who'' operator in \cite{Hintikka:kab},
and Plaza a ``knowing-value'' operator \Kv{} in his seminal work \cite{Plaza89:lopc}.
However, it is the recent effort in providing formal semantics and axiomatizations of those non-standard knowledge operators,
as outlined in the survey \cite{W16:survey},
that layed a solid foundation for further investigation.
Among all the non-standard knowledge operators axiomatized so far,
the ``knowing-value'',
or equivalently the ``knowing-what'' operator,
has received most attention,
partly due to its mathematical elegance and
partly because of its potential application in information security reasoning.
Recent major development of this \Kv{} operator started with the axiomatization in \cite{WF14, WF13},
followed by the simplification of the semantics in \cite{GW16} and
the enrichment of the language
through announcing values and propositions in \cite{baltag16:edl, EGW:kvp}.

Building on the above results about the ``knowing-value'' operator,
this paper considers the knowedge of the \textit{functional dependency} between variables,
which is a natural extension of
the knowledge of individual variables
to the knowledge of relations among variables.
The precise meaning of ``knowing a/the functional dependency between variables''
is not easy to pin down and might be context sensitive,
as illustrated by the difficulty to choose the correct article here:
it is safe to say ``knowing the value of a variable''
since a variable can only take one value in the actual world (or any world),
but there might be quite a lot functions,
different from each other,
yet all governing the relation between the same two variables in a set of possible worlds.
We postpone further discussion to the last section,
but it should be intuitive that
``functionality'' is at least a minimal requirement,
that is,
to know any functional dependency between variables $c$ and $d$,
at least for any two possible worlds where $c$ has the same value,
$d$ should also have the same value,
however different from the value of $c$.

Here one natural choice is
to make functionality
the only requirement of ``knowing a/the functional dependency between variables'',
and both \cite{EGW:kvp, baltag16:edl} made this choice.
The key intuition behind this choice is that,
what matters in the end are the values of variables.
Recall how implication in Heyting algebras for intuitionistic logic is defined:
$p\to q$ is the weakest proposition such that
if conjoined with $p$ by taking conjunction,
we get something stronger than $q$,
or in other words,
we are able to infer $q$.
In our knowing-value context,
we might also be interested and only interested in knowing the values.
Then,
functional dependency of $d$ upon $c$ should be interpreted
as the weakest proposition such that
if ``conjoined'' with the knowledge of the value of $d$,
we are able to infer the value of $c$.

The weakest proposition possessing this bridging-the-gap property
depends on how we interpret the word ``conjoin'' here.
If it is taken to be the propositional conjunction,
then what we get is again the propositional implication $\Kv(c) \to \Kv(d)$.
If ``conjoin'' means revealing the actual value of $c$ to the agent,
then $[c]\Kv(d)$ in \cite{EGW:kvp} is an exact formalization.
Model-theoretically speaking this means that
functionality between $c$ and $d$ holds on the set of possible worlds
where the value of $c$ is correct, and consequently,
once all possible worlds where $c$'s value is wrong are eliminated,
the value of $d$ becomes fixed and hence known.
If ``conjoin'' means to entertain the hypothesis that
one of the epistemically possible values of $c$ obtains,
then the functionality condition from $c$ to $d$
among all possible worlds is the minimal requirement.
This is equivalent to $K[c]\Kv(d)$,
which says:
I know that for all possible values that $c$ can take,
once that is revealed to be the real value of $c$,
the value of $d$ will also be known.
In \cite{baltag16:edl}, this is exactly the semantics of $K^cd$.

Another famous work on dependency
taking functionality as the only requirement is Dependence Logic \cite{Vaananen07:dpl, vaananen08:mdpl}.
The team semantics it uses for the dependence atom $=\!\!(c, d)$
is exactly the functionality condition,
though the teams in a model do not originate from an epistemic setting.

The semantics to be proposed in this paper
will differ from the above pure functionality approach and
will subsume it as a special case.
But the key inspiration comes from the basic strategy explained in \cite{W16:survey}:
pack an existential quantifier and a modal quantifier together in the form of $\exists x \Box\phi(x)$.
Under this pattern,
the knowledge of the functional dependency of variables $c $ and $ d$ is expressed as:
there exists a function $f$ in a predetermined function domain $\BF$ which works,
in the sense that $d = f(c)$,
in all epistemic scenarios.
Thus, $\BF$ can be seen as an agent's prior knowledge about possible functional dependency relations,
and to know the dependency between variables is
to find a possible function that works or explains all possibilities.
To put it more colloquially,
to know the functional dependency between $c$ and $d$
is not simply to see that functionality holds between them,
but also to see that the functional relation ``make sense''.
Let us use $\Kf(C, d)$
to express this knowledge of functional dependency of $d$ upon a finite set of variables $C$. 

As argued above,
when ``knowing-dependency'' serves as a tool for expressing potential ``knowing-value'',
we do not need a requirement stronger than functionality.
But this is not always the case.
Consider a typical scenario in information security:
agent $\mathtt{A}$ receives an encrypted message $d = \texttt{enc}(c)$ from agent $\mathtt{B}$.
Ideally, $\mathtt{A}$ knows the value of $d$,
say $d = 0$,
but knows nothing about $c$.
So the epistemically possible worlds for $\mathtt{A}$ are
\[\llrr{c = 0, d = 0}, \llrr{c = 1, d = 0}, \llrr{c = 2, d = 0}, \cdots .\]
Certainly the functionality from $c$ to $d$ holds
as $d$ has only one possible value.
But agent $\mathtt{A}$ is apparently ignorant about
the functional relationship between variables $d$ and $c$.
The witness to the functionality here is the constant function $0$,
which is extremely unlikely to be the encryption function $\texttt{enc}$ that $\mathtt{B}$ uses.
So agent $\mathtt{A}$ would not in this case
assert that she knows that the message $d$ she receives
is derived from the message $c$ that $\mathtt{B}$ intends to send through some encryption:
no encryption function she deems possible would allow all those possibilities.
Thus, to claim the knowledge of the functional dependency of $d$ on $c$,
we do need something more than functionality.
With our operator $\Kf$,
we can use $\Kf_{\mathtt{A}}(\llrr{c}, d)$ to express
``$\mathtt{A}$ knows a functional dependency relation between $c$ and $d$
that is plausible in the information security context'',
if we let $\BF$ to be the set of all functions that is plausible in this context.

Thus, the $\Kf$ operator can be used to model scenarios where
the value of variables in the realized world (the agent's world) is not the sole concern of the agent.
It might be that our agent does not want an inexplicable relationship between variables,
or it might be that the agent requires that any functional dependency she knows to be applicable
not only to her actual world but also to worlds metaphysically possible or worlds evolved in time,
where some \textit{a priori} rules preclude too strange functional dependency relationships.
In the previous case,
certainly $d$ is known to $\mathtt{A}$ already,
but the constant function that witnesses the functionality there
is not likely to be applicable to another round of message exchange. 

In the rest of the paper,
we first define the logic that incorporates
knowledge $K$, ``knowing-value''$\Kv$ and ``knowing-function''$\Kf$ operators
which we call $\logicLKVF$ and the corresponding base axiom system $\axiomLKVF$.
Then we show how different domains of functions,
viewed as a parameter of $\logicLKVF$,
induce different sets of validities and axioms.
Then all those cases will be put into a unified framework
where a multiagent logic with the same operators is axiomatized.
In the last section,
we will discuss
further interpretations of
``knowing a/the functional dependency between variables'' and
possible future work.

\section{Preliminaries}

\subsection{Syntax and Semantics of \logicLKVF}

\begin{definition}[Syntax]\label{syntax}
    Given
    a countably infinite set $\BP$ of propositional letters and
    a set $\BQ$ of the names of variables,
    the formulas in \logicLKVF{} are defined by:
    $$\phi ::= \top\ |\ p\ |\ \Kv(d)\ |\ \Kf(C, d)\ |\ (\phi\land\phi)\ |\ \lnot\phi\ |\ K\phi$$
    where $p \in \BP,
    d \in \BQ$, and $C\subseteq_{fin}\BQ$.
    $\subseteq_{fin}$ means a finite subset,
    possibly empty.
\end{definition}
Here $\Kv(d)$ is to be interpreted as ``knowing the value of $d$'',
and $K\phi$ ``knowing that $\phi$ is the case''.
$\Kf(C, d)$ says that the agent knows a functional dependency relationship from $C$ to $d$.
By convention,
we set $\bot, (\phi\lor\psi), (\phi\to\psi)$ as
$\lnot\top, \lnot(\lnot\phi \land \lnot\psi), \lnot(\phi \land \lnot\psi)$,
and omit unnecessary parentheses.
We also write $\Kf(c, d)$ as an abbreviation of $\Kf(\llrr{c}, d)$. 

In order to interpret the $\Kf$ operator in \logicLKVF,
we need a predefined domain \BG{} of possible values for variables in \BQ,
and a set of \BF{} functions on this \BG{}.
\BF{} might contain polyadic functions in $\BG^{\BG^{n}}$ and also zero-adic functions.
Formally $\BF\subseteq \bigcup_{n = 0}^{\infty}\BG^{\BG^n}$.
It is important to note here that in this setting,
\BF{} and \BG{} are important parameters of $\logicLKVF$ instead of parts of the models.
In other words, they are shared by all models in the logic.

As we are considering single agent S5,
no explicit accessibility relation is needed.
So formally, a model is:
$$
\modelM = \lr{W,  U,  V}
$$
where $W$ is the set of possible worlds,
$U:W\times\BP\to \llrr{0,1}$ is the assignment for propositional letters,
and $V:W\times\BQ\to \BG$ is the assignment for variables.
For any finite subset $C$ of $\BQ$,
we fix an order of the elements in $C$ and define $V(w, C) = \lr{V(w, d)\ |\ d\in C}$.
When $C$ is empty,
this degenerates into the unique empty tuple.
We call this the joint assignment of variables in $C$,
and whenever we have a function from $\BQ$ to $\BG$,
if it is applied to a set $C$,
we mean this joint assignment.
Now the truth conditions are:

\begin{definition}[Semantics]
    $$
    \begin{array}{lll}
      \modelM, w \vDash \top & & \textrm{always} \\
      \modelM, w \vDash p &\Leftrightarrow & U(w, p) = 1 \\
      \modelM, w \vDash \Kv(d) & \LRA & \exists x\in \BG,  \forall w'\in W,  V(w',  d) = x \\
      \modelM, w \vDash \Kf(C, d) & \LRA & \exists f\in \BF,  \forall w'\in W,  V(w', d) = f(V(w', C)) \\
      \modelM, w \vDash \phi\land\psi & \LRA & \modelM, w\vDash \phi\ \textrm{and}\ \modelM, w\vDash \psi \\
      \modelM, w \vDash \lnot\phi &\LRA &\textrm{not}\ \modelM, w\vDash \phi \\
      \modelM, w \vDash K\phi &\LRA & \forall w'\in W,  \modelM, w'\vDash \phi
    \end{array}
    $$
\end{definition}
Here the $\Kv$ operator has the same meaning as that of $\Kv$ in \cite{WF14}:
$\Kv(d)$ means that under current epistemic uncertainty,
the value of $d$ is certain.
The new operator $\Kf(C,d)$ here means:
the agent can find a function in the set of available functions $\BF$
that can be used to explain the functional dependency relation between $C$ and $d$.
While both operators have the same structure in their semantics,
namely $\exists\Box$,
the key difference here is that,
if $\Kv(d)$ is true,
only one value will be the witness,
yet for $\Kf$ this is usually not the case. 

To summarize,
our logic $\logicLKVF$
extends
the standard propositional epistemic logic
by
adding $\Kv(d)$ and $\Kf(C, d)$ to the language,
adding a valuation of the variables to the models,
and introducing a new function domain $\BF$ as part of the logic.
Now it has the following parameters:
\begin{itemize}
    \item $\BP$: the set of propositional letters
    \item $\BQ$: the set of variable names
    \item $\BG$: the set of values that variables can take
    \item $\BF$: the set of functions that the agent deems possible \textit{a priori}.
\end{itemize}
All of them will have some effect on the validities of $\logicLKVF$,
but $\BP$ and $\BQ$ will remain unchanged throughout the whole paper,
since they can be viewed as part of the language.
$\BG$ needs to be large for completeness results,
and we will specify how large it should be.
$\BF$ will change the validities in $\logicLKVF$ in an interesting way.
Thus, it will be one of the main focuses of this paper.
Later we show how $\BF$ can also be put into the models.

\subsection{Base Axiom System and Soundness Condition}
As defined above,
the $\Kf$ operator expresses functional dependencies among variables
and thus resembles the dependency relation in database theory.
Using Armstrong's three axioms in \cite{Armstrong74},
we obtain this base system \axiomLKVF:\\
\begin{center}
    \begin{tabular}{lclc}
      $\mathtt{TAUT}$ & \textrm{Propositional Tautologies} & & \\
      $\mathtt{K}   $ & $K(\phi\to\psi)\to (K\phi\to K\psi)$  & $\mathtt{KV4} $ & $\Kv(d)\to K\Kv(d)$ \\
      $\mathtt{T} $   & $K\phi\to\phi$                        & $\mathtt{KV5} $ & $\lnot\Kv(d) \to K\lnot\Kv(d)$\\
      $\mathtt{4} $   & $K\phi\to KK\phi$                     & $\mathtt{KF4} $ & $\Kf(C, d)\to K\Kf(C, d) $\\
      $\mathtt{5} $   & $\lnot K\phi \to K\lnot K\phi$        & $\mathtt{KF5} $ & $\lnot\Kf(C, d) \to K\lnot\Kf(C, d)$ \\
    \end{tabular}
    
\end{center}
\begin{center}
    \begin{align*}
      \mathtt{PROJ}  &\phantom{{ } }\quad  { \Kf(C, c)\quad c\in C } \\
      \mathtt{TRAN}  &\phantom{{ } }\quad  { \left(\bigwedge_{d\in D}\Kf(C, d)\right) \land \Kf(D, e)\to \Kf(C, e)}\\
      \mathtt{VF} & \phantom{{ } }\quad  { \left(\bigwedge_{c\in C}\Kv(c)\right) \land \Kf(C, d)\to \Kv(d) }
    \end{align*}
\end{center}

\begin{center}
    \begin{tabular}{lclc}
      \MP& $\dfrac{\phi, \phi\to\psi}{\psi}$ & \NEC &$\dfrac{\phi}{K\phi}$\\
    \end{tabular}
\end{center}

Here only the projectivity and transitivity axioms are used.
The reason is that in our language
the syntax of $\Kf$ allows only one variable to be dependent upon a set of variables,
not a set upon a set.
Thus, the additivity property $\Kf(A, B)\land \Kf(A, C) \to \Kf(A, B\cup C)$
dealing with the second set of variables after $\Kf$
is not used and
will follow from the properties of the conjunction
if we define $\Kf(C, D)$ to be $\bigwedge_{d\in D}\Kf(C, d)$.
Then the augmentation axiom in the usual presentation of Armstrong's axioms
follows from additivity, projectivity, and transitivity.
To show this, suppose $\Kf(A, B)$.
By projectivity,
$\Kf(A\cup C, C)$ and $\Kf(A\cup C, A)$.
Together with the assumption $\Kf(A, B)$,
we have $\Kf(A\cup C, B)$.
So by additivity applied to $\Kf(A\cup C, C)$ and $\Kf(A\cup C, B)$,
$\Kf(A\cup C, B\cup C)$.

By convention,
an empty conjunction is $\top$.
So when the set $D$ in $\ttt{TRAN}$ is empty,
it actually says $\Kf(\emptyset, e) \to \Kf(C, e)$ for all $C\subseteq \BQ$.
And when the set $C$ in $\ttt{VF}$ is empty,
it says $\Kf(\emptyset, d) \to \Kv(d)$.

We will discuss the axiomatizations of three different settings
using a large, a small, and an intermediate $\BF$ in $\logicLKVF$ respectively.
For them,
we either use $\axiomLKVF$ itself or add some other special axioms.
To simplify repetitive work,
here we give a condition on $\BF$ in $\logicLKVF$ for the soundness of $\axiomLKVF$:
\begin{proposition}\label{SoundCondition}
    When $\BF$ satisfies the following,
    $\axiomLKVF$ is sound with respect to $\logicLKVF$:
    \begin{itemize}
        \item For every $i,j\in\mathbb{N}$ such that $0 <i \le j$ and function $f: \BG^j \to \BG,
        f(x_1,x_2,\ldots,x_j) = x_i$ is in $\BF$.
        We denote this special projection function as $id_{i,j}$.
        \item For every $f\in \BF$,
        if $f$ is $n$-ary with $n \ge 1$,
        then for every $g_1,\ldots g_n\in \BF$,
        $f(g_1(),\ldots,g_n())\in \BF$.
        Namely, $\BF$ is closed under function composition.
    \end{itemize}
\end{proposition}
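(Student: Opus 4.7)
The plan is to verify each axiom and inference rule of $\axiomLKVF$ one at a time against the semantics, tracking where each of the two stated closure conditions on $\BF$ is used. Since $K\phi$ is interpreted against the universal accessibility on $W$, soundness of the modal fragment ($\ttt{K}, \ttt{T}, \ttt{4}, \ttt{5}$) is immediate from the standard argument for S5 on a single equivalence class, and \MP{} and \NEC{} fall straight out of the definition of validity.

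The four introspection axioms $\ttt{KV4}, \ttt{KV5}, \ttt{KF4}, \ttt{KF5}$ I would handle uniformly. Inspection of the truth clauses shows that the satisfaction of $\Kv(d)$ and $\Kf(C, d)$ depends only on $\modelM$, not on the evaluation world: both are sentences of the form ``there exists a witness such that for every $w' \in W$, \dots'' with no free occurrence of $w$. Thus if $\modelM, w \vDash \Kv(d)$ then $\modelM, w' \vDash \Kv(d)$ for every $w' \in W$, which is exactly $\modelM, w \vDash K\Kv(d)$, and the three variants are analogous.

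For the dependency axioms, the first closure condition yields $\ttt{PROJ}$: writing $C$ in its fixed order as $\lr{c_1, \dots, c_j}$ and taking $c = c_i$, the projection $id_{i, j}$ lies in $\BF$ and witnesses $\Kf(C, c)$, since $V(w', c)$ is by definition the $i$-th coordinate of $V(w', C)$. Axiom $\ttt{VF}$ requires no closure: if each $V(\cdot, c)$ for $c \in C$ is constant and $V(\cdot, d) = f(V(\cdot, C))$ for some $f \in \BF$, then $V(\cdot, d)$ is $f$ applied to a fixed tuple and hence constant as well, giving $\Kv(d)$. For $\ttt{TRAN}$, take the witnessing $g_d \in \BF$ of each $\Kf(C, d)$ with $d \in D$ and the witnessing $f \in \BF$ of $\Kf(D, e)$; the composed function $h(\vec x) = f(g_{d_1}(\vec x), \dots, g_{d_k}(\vec x))$ lies in $\BF$ by the second closure condition and satisfies $h(V(w', C)) = V(w', e)$ uniformly in $w'$, witnessing $\Kf(C, e)$.

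The delicate points I anticipate are the empty-set corner cases explicitly flagged by the paper. When $D = \emptyset$ and $C$ is nonempty in $\ttt{TRAN}$, $\Kf(\emptyset, e)$ is witnessed by a $0$-ary $f \in \BF$, and what I need is a constant $|C|$-ary function in $\BF$ with that same value; here the second closure condition must be read in the broad clone-theoretic sense that allows dummy arguments via composition with projections, so that constants of higher arity really are produced from $0$-ary constants. The symmetric case $C = \emptyset$ in $\ttt{TRAN}$ (and likewise in $\ttt{VF}$, $\ttt{PROJ}$) is easier, since $V(w', \emptyset)$ is the unique empty tuple and the required witness can be taken $0$-ary. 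Once these boundary conventions are fixed, the case analysis above gives soundness of $\axiomLKVF$ whenever $\BF$ satisfies the two stated conditions.
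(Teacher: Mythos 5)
Your argument follows essentially the same route as the paper's: $id_{i,|C|}$ witnesses $\ttt{PROJ}$, composing the witnesses of the antecedent handles $\ttt{TRAN}$, $\ttt{VF}$ needs no closure property at all, and the S5 and introspection axioms are routine because the truth of $\Kv(d)$ and $\Kf(C,d)$ does not depend on the evaluation world (the paper simply omits these as trivial). The one place you go beyond the paper is the $D=\emptyset$ instance of $\ttt{TRAN}$, namely $\Kf(\emptyset,e)\to\Kf(C,e)$ with $C\neq\emptyset$, and your worry there is legitimate: the second bullet only composes an $n$-ary $f$ with $n\ge 1$ against members of $\BF$, and such composition never turns a $0$-ary constant into a $|C|$-ary one, so under the literal reading the family consisting of all projections together with a single $0$-ary constant satisfies both bullets while refuting this instance (the only unary function available is $id_{1,1}$, which need not collapse the realized values of $C$ to the constant's value). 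The paper's proof of $\ttt{TRAN}$ silently skips this case; it is harmless for the concrete domains used later, each of which either contains no $0$-ary functions or already contains the required higher-arity constants, but your proposed repair --- reading closure under composition in the clone-theoretic sense that admits dummy arguments --- is exactly what is needed for the proposition to hold as stated.
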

\begin{proof}
    Here we only prove the soundness of the three less trivial axioms:
    \begin{itemize}
        \item By the first property of $\BF$,
        $\mathtt{PROJ}$ holds.
        If $d\in C$,
        suppose $d$ appears in $C$ as the $i$th variable,
        then $V(w,d) = V(w,C)[i]$ always holds,
        and thus the witness of $\Kf(C, d)$ is $id_{i,|C|}$.
        \item By the second property of $\BF$, $\mathtt{TRAN}$ holds.
        The antecedent of this axiom states the existence of $f$ and $g_i$s in the second property.
        So the composition of $f$ and $g_i$s exists in $\BF$,
        which witnesses the consequent of $\mathtt{TRAN}$.
        \item We want to show
        \[\mathtt{VF}: \left(\bigwedge_{c\in C}\Kv(c)\right) \land \Kf(C, d)\to \Kv(d).\]
        Let $C$ be enumerated as $c_1,\ldots,c_n$ and suppose the antecedent in $\mathtt{VF}$ holds.
        Then $\bigwedge_{c\in C}\Kv(c)$ is true.
        This means we have a tuple $\bar{a}\in\BG^n$ such that
        $$\forall w, V(w, C) = \bar{a}.$$
        Further we have $\Kf(C,d)$,
        which means we have a $f\in \BF$ such that
        $$\forall w, V(w, d) = f(V(w, C)) = f(\bar{a}). $$
        Thus, there exists an element $b := f(\bar{a}) \in \BG$ such that
        $d$ evaluates to it in all possible worlds. \qed
    \end{itemize} 
\end{proof}
We will briefly mention how
$\BF$ is going to satisfy this soundness condition in all the following cases.

\section{Full Domain of Functions}

In this section,
we deal with the case where $\BF$ is as large as possible,
namely $\BF = \bigcup\llrr{\BG^{\BG^i}\ |\ i\in \mathbf{N}}$.
Now the $\Kf$ operator degenerates into a functionality test,
as all functions are allowed:
\begin{multline*}
    \modelM, w\vDash \Kf(C, d) \LRA \\
    \forall w_1,  w_2\in W,  V(w_1,  C) = V(w_2,  C)
    \Rightarrow V(w_1, d) = V(w_2,  d).$$
\end{multline*}
This is true because once we have the right hand side true,
we will obtain a partial function $f$ satisfying $\forall w\in W,  f(V(w, C)) = V(w, d)$.
And it is trivial to extend this partial function into a total function.

Now, if $\modelM, w\vDash \Kv(d)$,
then $\forall w_1,  w_2\in W,  V(w_1, d) = V(w_2, d)$,
so the right hand side of the above truth condition holds,
and consequently,
$\Kf(C,d)$ is true in $\modelM, w$.
This justifies the soundness of our new axiom in this case:
$$\mathtt{EXT}:\quad \Kv(d)\to\Kf(C, d)$$
where $C\subseteq_{fin}\BQ$, possibly empty.
We name this axiom $\mathtt{EXT}$ because it means that in this case
every function on $\BG$, regardless of its meaning,
can serve as a witness of the truth condition of $\Kf$.
Further,
$\BF$ satisfies the condition given in Proposition \ref{SoundCondition},
so $\axiomLKVF+\mathtt{EXT}$ is sound.
In the following,
we prove that if $\BG$ is sufficiently large,
then $\axiomLKVF+\mathtt{EXT}$ is in fact complete as well.

Given an arbitrary set $A$ of formulas consistent in $\axiomLKVF+\mathtt{EXT}$,
the Lindenbaum lemma enables us to construct
a maximal consistent set $\Gamma$ such that
$A\subseteq\Gamma$.
Now to build a model for $\Gamma$,
we need to accompany this $\Gamma$ by other maximal consistent sets (possible worlds).
For example,
if we have $\lnot \Kf(C,d)$ in $\Gamma$,
then we need two possible worlds on which
the values of $C$ coincide while the values of $d$ on them diverge.

To this end,
we first define some useful sets.
Given any maximal consistent set $\Gamma$, define
$$\Gbk = \llrr{\phi \sep K\phi\in\Gamma},  \Gbkv = \llrr{d\sep \Kv(d)\in\Gamma}.$$
They collect all the propositional and the value knowledge respectively in $\Gamma$.
For any $C\subseteq\BQ$,
we say $C$ is \textbf{closed under} $\Kf$ \textbf{in} $\Gamma$ if
for all $C_f\subseteq_{fin}C$ and $d\in\BQ$ such that $\Kf(C_f, d)\in\Gamma$,
we have $d\in C$ as well.
Using axioms $\mathtt{TRAN}$ and $\mathtt{PROJ}$,
it is not hard to see that for all $C\subseteq_{fin}\BQ$,
$$C^{+\Gamma} := \llrr{d \sep \Kf(C, d)\in\Gamma}$$
is closed under $\Kf$ in $\Gamma$ and $C\subseteq C^{+\Gamma}$.
This can be seen as the dependency hull of the finite set $C$.
An important observation is that, by axiom $\ttt{VF}$,
if $\Kf(\emptyset, d)\in \Gamma$,
then $\Kv(d)\in\Gamma$,
so $\emptyset^{+\Gamma} \subseteq \Gbkv$.
Also,
by axiom $\ttt{EXT}$,
if $\Kv(d)\in\Gamma$ then $\Kf(C, d)\in\Gamma$ for all $C\subseteq \BQ$.
So $\Gbkv \subseteq C^{+\Gamma}$ for all $C\subseteq_{fin} \BQ$,
and in particular $\Gbkv \subseteq \emptyset^{+\Gamma}$.
So $\Gbkv = \emptyset^{+\Gamma}$.
This motivates us to define the set of all finitely generated closed sets:
$$ M_{\Gamma} = \llrr{C^{+\Gamma} \sep C\subseteq_{fin} \BQ}.$$
Clearly $M_{\Gamma}$ is non-empty,
and $\Gbkv\in M_{\Gamma}$.
Also,
for all $X\in M$ we have $\Gbkv\subseteq X$,
so in other words,
any finitely generated closed set contains all variables with known value.
Then, we have the following disjoint decomposition of $\BQ$ using $X\in M_{\Gamma}$:
$$\BQ = \Gbkv \cup (X\backslash \Gbkv) \cup (\BQ\backslash X).$$
Intuitively,
the values of the variables in $\Gbkv$ must hold fixed among all possible worlds;
the values of the variables in $X\backslash \Gbkv$ must vary relative to those in $\Gbkv$
in a uniform way to respect the functional dependencies among them;
and the values of the variables in $\BQ\backslash X$ must vary
even when all values in $X\backslash \Gbkv$ are fixed,
since they are not determined by $X$.

For example,
suppose $\BQ = \llrr{a, b, c, d}$, $\BG = \mathbb{N}$,
and we want to model $\Gamma$ whose knowledge consists only of: 
\[\Kv(a),\Kf(b, c)\]
and their logical consequences such as $\Kf(c, a)$.
Then, when considering $X = \llrr{a, b, c} = \llrr{b}^{+\Gamma}$,
we have $\Gbkv = \llrr{a}$,
$X\backslash\Gbkv = \llrr{b,c}$,
and $\BQ\backslash X = \llrr{d}$.
Among all possible worlds,
the value of $a$ must be fixed;
$c$ must change as $c\not\in \Gbkv$,
but it should change together with $b$ in case of violating functionality;
and $d$ has to change even when $b$ together with $c$ are fixed to refute $\Kf(b, d)$.
Thus, one instantiation of this could be:

\begin{center}
    \begin{tabular}{|r|c|c|c|}
      \hline
      $\Gbkv$ & $a = 0$ & $a = 0$ & $a = 0$ \\
      \hline 
      $X\backslash\Gbkv$ & $b = 0$ & $b = 1$ & $b = 1$ \\
              & $c = 0$ & $c = 1$ & $c = 1$ \\
      \hline
      $\BQ\backslash X$ & $d = 0$ & $d = 1$ & $d = 2$ \\
      \hline
    \end{tabular}
\end{center}
where the columns are possible assignments.
For every $X\in M_{\Gamma}$ which collects all closed set of variables,
we need such possibilities to take care of all formulas of the form $\lnot\Kf(C, d)$ in $\Gamma$,
because there will be one $X$,
namely $C^{+\Gamma}$,
that separates $C$ and $d$.
Then, the value of $d$ can vary even when those of $C$ are fixed.

The reason we are using only finitely generated closed subsets of $\BQ$ is that,
when $|\BQ|$ is infinite,
the cardinality remains the same.
Formally, define $\powerset_f(\BQ)$ to be the collection of all finite subsets of $\BQ$,
then $|\powerset_f(\BQ)| = |\BQ|$ when $|\BQ| \ge \aleph_0$.
Of course, when $\BQ$ is finite,
$\powerset_f$ coincides with $\powerset$,
the ordinary powerset construction.
Then, by the definition of $M_{\Gamma}$, $|M_{\Gamma}| \le |\powerset_f(\BQ)|$.

Now suppose $|\BG| \ge |\powerset_f(\BQ)\times\llrr{0, 1}|$,
which is the largeness condition for $\BG$ in this case,
then there exists an injection $g: M_{\Gamma}\times\llrr{0,1} \to \BG$.
Using this $g$ we can define a function $V_p$ on $M_{\Gamma}\times\llrr{0, 1}\times \BQ$ as follows:
$$
V_p(\lr{X, i}, d) = \left\{
    \begin{array}{ll}
      g(\emptyset, 0)     & d\in\Gbkv \\
      g(X, 0)             & d\in X\backslash \Gbkv \\
      g(X, i)             & d\in \BQ\backslash X .
    \end{array}
\right. 
$$
Notice how this satisfies the informal requirement, illustrated by the example above,
over the values the variables in different regions should take.
When $d\in\Gbkv$, its value is fixed to $g(\emptyset, 0)$.
When $d\in X\backslash\Gbkv$,
its value depends on $X$ as a whole but nothing else,
so all variables in $X\backslash\Gbkv$
change uniformly
from what they are assigned by $g(\emptyset, \cdot)$.
When $d\in \BQ\backslash X$,
its value further depends on $i$,
so will change even when the values of the variables in $X$ are fixed. 

Formally, this definition allows us to show:
\begin{proposition}\label{kvkf} For all $C\subseteq_{fin} \BQ, d\in \BQ$:
    \begin{enumerate}
        \item If $\Kv(d)\in\Gamma$ then 
        $$\exists x \in G,  \forall \lr{X, i}\in M\times\llrr{0, 1},
        V_p(\lr{X, i}, d) = x ;$$
        
        \item If $\Kv(d)\not\in\Gamma$ then 
        $$\exists \lr{X, i}, \lr{X', i'}\in M\times\llrr{0, 1},
        V_p(\lr{X, i}, d) \not= V_p(\lr{X', i'}, d) ;$$
        
        \item If $\Kf(C, d)\in\Gamma$ then 
        \begin{multline*}
            \forall \lr{X, i}, \lr{X', i'}\in M\times\llrr{0, 1},  \\
            V_p(\lr{X, i}, C) = V_p(\lr{X', i'}, C)
            \Rightarrow
            V_p(\lr{X, i}, d) = V_p(\lr{X', i'}, d);
        \end{multline*}

        \item
        If $\Kf(C, d)\not\in\Gamma$ then
        \begin{multline*}
            \exists \lr{X, i}, \lr{X', i'}\in M\times\llrr{0, 1},  \\
            V_p(\lr{X, i}, C) = V_p(\lr{X', i'}, C)
            \textrm{\ and\ }
            V_p(\lr{X, i}, d) \not= V_p(\lr{X', i'}, d).
        \end{multline*}
    \end{enumerate}
\end{proposition}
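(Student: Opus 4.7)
The plan is to unpack the three-level structure of $V_p$ directly. By construction, $V_p(\lr{X, i}, d)$ takes the value $g(\emptyset, 0)$, $g(X, 0)$, or $g(X, i)$ according to whether $d$ lies in $\Gbkv$, in $X \setminus \Gbkv$, or in $\BQ \setminus X$. Because $g$ is injective on $M_\Gamma \times \llrr{0, 1}$, any equality between $V_p(\lr{X, i}, c)$ and $V_p(\lr{X', i'}, c)$ already constrains where $c$ lies together with how $\lr{X, i}$ relates to $\lr{X', i'}$; this is the main technical leverage.

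Parts 1, 2, and 4 I would dispatch by reading off the answer from the definition. For part 1, if $\Kv(d) \in \Gamma$ then $d \in \Gbkv$, so $V_p(\lr{X, i}, d) = g(\emptyset, 0)$ uniformly. For part 2, if $\Kv(d) \notin \Gamma$ then $d \notin \Gbkv$; since $\Gbkv = \emptyset^{+\Gamma} \in M_\Gamma$, taking $X = X' = \Gbkv$ with $i = 0$, $i' = 1$ puts $d$ in $\BQ \setminus \Gbkv$ and makes the two values $g(\Gbkv, 0)$ and $g(\Gbkv, 1)$, which differ by injectivity. For part 4, if $\Kf(C, d) \notin \Gamma$ then $d \notin C^{+\Gamma}$; taking $X = X' = C^{+\Gamma}$, every $c \in C$ lies in $X$, so the two values at $c$ coincide regardless of $i$ and $i'$, while $d \in \BQ \setminus X$ gives $g(X, 0) \ne g(X, 1)$.

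The bulk of the work is part 3. Assuming $\Kf(C, d) \in \Gamma$ and that $V_p$ agrees on $C$ under the two pairs $\lr{X, i}$ and $\lr{X', i'}$, I would split into three cases. If $\lr{X, i} = \lr{X', i'}$, the conclusion is immediate. If $X = X'$ but $i \ne i'$, then any $c \in \BQ \setminus X$ would give distinct values $g(X, i)$ and $g(X, i')$, so agreement on $C$ forces $C \subseteq X$; since $X \in M_\Gamma$ is closed under $\Kf$ in $\Gamma$, $\Kf(C, d) \in \Gamma$ yields $d \in X$, and $V_p$ then sends $d$ to the same value under both pairs. If $X \ne X'$, a routine check of the three blocks for $c$, using injectivity of $g$, rules out $c \in X \setminus \Gbkv$ and $c \in \BQ \setminus X$, leaving $C \subseteq \Gbkv$; axiom $\ttt{VF}$ together with $\Kf(C, d) \in \Gamma$ then yields $\Kv(d) \in \Gamma$, so $d \in \Gbkv$ and both values at $d$ equal $g(\emptyset, 0)$.

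The main obstacle is the bookkeeping in part 3, in particular spotting that axiom $\ttt{VF}$ is exactly what is needed to close the case $X \ne X'$: once $C$ falls entirely in $\Gbkv$, only $\ttt{VF}$ can propagate this to $d$. The rest is routine exploitation of the injectivity of $g$ and the closedness of each finitely generated set $C_0^{+\Gamma}$ under $\Kf$ in $\Gamma$, which follows from $\ttt{TRAN}$ and $\ttt{PROJ}$.
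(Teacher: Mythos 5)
Your proposal is correct and follows essentially the same route as the paper's proof: parts 1, 2, and 4 read off the definition of $V_p$ using the injectivity of $g$ and the facts $\Gbkv = \emptyset^{+\Gamma} \in M_\Gamma$ and $C \subseteq C^{+\Gamma}$, and part 3 rests on the same three ingredients (injectivity of $g$ forcing $X = X'$ when some $c \in C$ escapes $\Gbkv$, closedness of $X$ under $\Kf$ giving $d \in X$ when $C \subseteq X$, and axiom $\ttt{VF}$ handling $C \subseteq \Gbkv$). The only difference is cosmetic: you case-split on how $\lr{X,i}$ relates to $\lr{X',i'}$ first, whereas the paper case-splits on whether $C \subseteq \Gbkv$ first, but the resulting arguments are the same.
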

\begin{proof}
    For the first part,
    the witness is $x = g(\emptyset, 0)$ and can be verified easily.
    For the second part, as we observed before,
    $\Gbkv = \emptyset^{+\Gamma} \in M_{\Gamma}$.
    Then, 
    if $d\not\in\Gbkv$,
    on $\lr{\Gbkv, 0}$ and $\lr{\Gbkv, 1}$ 
    our valuation function $V_p$ gives different values by the injectivity of $g$.

    For the third part, two cases are possible.
    If $C\subseteq\Gbkv$,
    then $d\in\Gbkv$ by $\ttt{VF}$.
    Then $V_p$ assigns $g(\emptyset, 0)$ to $d$ on all $\lr{X, i}$,
    making the consequent of the implication to be proven true throughout.

    Now suppose $C\not\subseteq\Gbkv$ and
    take $c\in C\backslash(\Gbkv)$ and
    $\lr{X, i}, \lr{X', i'}\in M\times\llrr{0, 1}$ such that
    $V_p(\lr{X, i}, C) = V_p(\lr{X', i'}, C)$.
    We first show $X = X'$ by focusing on this $c\not\in\Gbkv$.
    Since $c\not\in\Gbkv$,
    by the definition of $V_p$,
    there exists $j, k\in \llrr{0, 1}$ such that
    $$V_p(\lr{X, i}, c) = g(X, j),  V_p(\lr{X', i'}, c) = g(X', k). $$
    By the injectivity of $g$,
    they are equal only if at least $X = X'$.
    Based on this,
    if $i = i'$ then $\lr{X, i} = \lr{X', i'}$ and trivially $d$ receives the same value from $V_p$.

    If $i \not= i'$,
    recall that we assumed $V_p(\lr{X, i}, C) = V_p(\lr{X, i'}, C)$.
    For all $c\in C$,
    it follows that $c\in X$ as otherwise
    the values $V_p$ gives to $c$ differ on $i$ and $i'$.
    Hence $C\subseteq X$ and by assumption $X\in M_{\Gamma}$,
    which means $X$ is closed.
    Thus, as $\Kf(C, d)\in\Gamma$, $d\in X$ as well.
    By definition, $$V_p(\lr{X, i}, d) = g(X,  0) = g(X', 0) = V_p(\lr{X', i'}, d). $$

    For the last part,
    we assume that $\Kf(C, d)\not\in\Gamma$.
    Then $d\not\in C^{+\Gamma}$.
    By the injectivity of $g$ and the fact that $C\subseteq C^{+\Gamma}$,
    $$ V_p(\lr{C^{+\Gamma}, 0}, d)\not= V_p(\lr{C^{+\Gamma}, 1}, d),$$
    whereas
    $$V_p(\lr{C^{+\Gamma}, 0}, C) = V_p(\lr{C^{+\Gamma}, 1}, C). $$\qed
\end{proof}
The above proposition handles the knowledge and ignorance about values and functional dependencies.
Now we need to combine it with a traditional completeness proof for epistemic S5 logic.
Denote
\[L :=
    \llrr{\Delta \sep
      \Delta \textrm{\ is\ maximal\ consistent\ in\ }
      \axiomLKVF+\mathtt{EXT} \textrm{\ and\ }\Gbk\subseteq \Delta}
    .
\]
Here $L$ is non-empty since by axiom $\mathtt{T}$,
$\Gbk \subseteq \Gamma$ so at least $\Gamma\in L$.
Then we define a model on possible worlds $W = L\times M_{\Gamma}\times \llrr{0, 1}$:
$\modelM = \lr{W,  U, V}$ where for every $\lr{\Delta, C, i}\in W$:
$$
\begin{array}{lll}
  
  U(\lr{\Delta, C, i}, p) &=& [p \in \Delta] \\
  V(\lr{\Delta, C, i}, d) &=& V_p(\lr{C, i}, d)
\end{array}
$$
where $[p \in \Delta]$ is the indicator function of the statement $p\in \Delta$,
which evaluates to 1 if the statement is true and 0 otherwise.
Here each possible world has three components:
a maximally consistent set which contains all formulas true at the world (truth lemma),
a closed set of variables $C$
which is responsible for
instantiating the ignorance of the values of variables in $C$
under the functional dependency constraint,
and a number $0$ or $1$ which is responsible for
instantiating the ignorance of
the functionality property between variables in $C$ and variables outside $C$.

Now the goal is to show a truth lemma, i.e.,
for all $\lr{\Delta, C, i}\in W, \phi\in \Delta \LRA \modelM, \lr{\Delta, C, i}\vDash \phi$.
To this end,
we first need the following simple observation.             
\begin{proposition}\label{45}
    For all $\Delta\in L$,
    \begin{itemize}
        \item $\Kv(d)\in \Delta \LRA \Kv(d)\in \Gamma$
        \item $\Kf(C, d)\in \Delta \LRA \Kf(C, d)\in \Gamma$
        \item $K\phi \in \Delta \LRA K\phi \in \Gamma.$
    \end{itemize}
\end{proposition}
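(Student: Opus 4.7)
The plan is to reduce all three biconditionals to a single uniform pattern that exploits the positive and negative introspection axioms that are already included for each of the three ``knowledge'' operators in $\axiomLKVF$: the pair $\mathtt{4},\mathtt{5}$ for $K$, the pair $\mathtt{KV4},\mathtt{KV5}$ for $\Kv$, and the pair $\mathtt{KF4},\mathtt{KF5}$ for $\Kf$. Recall that membership in $L$ means exactly that $\Gbk = \{\phi : K\phi \in \Gamma\} \subseteq \Delta$, so the whole task is to show that the formulas $K\phi$, $\Kv(d)$, $\Kf(C,d)$ are ``transferred'' from $\Gamma$ into every such $\Delta$, and that the negations are transferred too.

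For the forward direction of the $K$-clause, I would suppose $K\phi \in \Delta$ and argue contrapositively: if $K\phi \notin \Gamma$, then by maximal consistency $\lnot K\phi \in \Gamma$, whence by axiom $\mathtt{5}$ we get $K\lnot K\phi \in \Gamma$, and therefore $\lnot K\phi \in \Gbk \subseteq \Delta$, contradicting $K\phi \in \Delta$. For the backward direction, suppose $K\phi \in \Gamma$; by axiom $\mathtt{4}$ we get $KK\phi \in \Gamma$, so $K\phi \in \Gbk \subseteq \Delta$. The clauses for $\Kv(d)$ and $\Kf(C,d)$ proceed by exactly the same two-line argument, using $\mathtt{KV4},\mathtt{KV5}$ and $\mathtt{KF4},\mathtt{KF5}$ respectively in place of $\mathtt{4},\mathtt{5}$ to push a positive assertion through $K$ and a negative assertion through $K$.

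There is no real obstacle here; the result is a standard consequence of having S5-style introspection for each modal operator in the language, plus the fact that $\Delta$ is a maximally consistent extension of $\Gbk$. The one thing worth flagging explicitly in the write-up is that $\mathtt{KV4}$, $\mathtt{KV5}$, $\mathtt{KF4}$, $\mathtt{KF5}$ are precisely the schemas that let us treat $\Kv(d)$ and $\Kf(C,d)$ as if they were ``$K$-like'' atoms for the purposes of being fixed across the cluster $L$; without them, membership of such formulas in $\Delta$ would depend on which $\Delta \in L$ we pick, and the truth lemma that Proposition~\ref{45} is designed to support would fail.
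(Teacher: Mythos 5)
Your proof is correct and follows essentially the same route as the paper's: the backward direction via the positive introspection axioms ($\mathtt{4}$, $\mathtt{KV4}$, $\mathtt{KF4}$) to push the formula into $\Gbk\subseteq\Delta$, and the forward direction via the negative introspection axioms ($\mathtt{5}$, $\mathtt{KV5}$, $\mathtt{KF5}$) together with maximal consistency. Your explicit remark that these axioms are exactly what fixes $\Kv(d)$ and $\Kf(C,d)$ across the cluster $L$ is a helpful addition, but the underlying argument is the same.
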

\begin{proof}
    Simply use the axioms $\mathtt{4, 5}$. For example, the third property follows from
    $$
    \begin{array}{llll}
      K\phi\in\Gamma & \Rightarrow &KK\phi\in\Gamma   &[\textrm{axiom\ }\mathtt{4}]\\
                     & \Rightarrow &K\phi \in \Gbk    &[\textrm{definiton\ of\ }\Gbk]\\
                     & \Rightarrow &K\phi \in \Delta       &[\Gbk \subseteq \Delta] \\
                     &&& \\    
      K\phi\not\in\Gamma
                     & \Rightarrow &\lnot K\phi\in\Gamma& [\Gamma\textrm{\ is\ maximally\ consistent}]\\
                     & \Rightarrow &K\lnot K\phi\in\Gamma  &[\textrm{axiom }\mathtt{5}]\\
                     & \Rightarrow &\lnot K\phi \in \Gbk   &[\Gbk\textrm{\ definition}]\\
                     & \Rightarrow &\lnot K\phi \in \Delta      &[\Gbk \subseteq \Delta] \\
                     & \Rightarrow &K\phi \not\in \Delta  &[\Delta\textrm{\ is\ maximally\ consistent}].
    \end{array}
    $$
\end{proof}

\begin{proposition}\label{notK}
    If $K\phi\not\in\Gamma$, then there exists $\Delta\in L$ such that  $\lnot\phi\in \Delta$.
\end{proposition}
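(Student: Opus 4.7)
The plan is to prove this by the standard existence lemma argument for normal modal logics, applied here to the K-operator. Since $\axiomLKVF+\mathtt{EXT}$ contains all the usual S5 axioms together with \textbf{K}, \textbf{MP}, and \textbf{NEC} for the modality $K$, the standard Lindenbaum-style construction should go through with essentially no interaction with the new operators $\Kv$ and $\Kf$.

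First I would assume $K\phi \not\in \Gamma$ and consider the candidate theory $\Gbk \cup \{\lnot\phi\}$. The key step is to show this set is $\axiomLKVF+\mathtt{EXT}$-consistent. Suppose, for contradiction, that it is inconsistent. Then, by compactness of propositional deduction, there are finitely many $\psi_1,\ldots,\psi_n \in \Gbk$ such that $\vdash (\psi_1 \land \cdots \land \psi_n) \to \phi$. Applying $\mathtt{NEC}$ gives $\vdash K((\psi_1 \land \cdots \land \psi_n) \to \phi)$, and then repeated use of the $\mathtt{K}$ axiom together with the standard derivable fact that $K$ distributes over finite conjunctions yields $\vdash (K\psi_1 \land \cdots \land K\psi_n) \to K\phi$. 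By definition of $\Gbk$, each $K\psi_i \in \Gamma$, so by maximal consistency of $\Gamma$ we conclude $K\phi \in \Gamma$, contradicting the assumption.

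Having established consistency of $\Gbk \cup \{\lnot\phi\}$, I would apply the Lindenbaum lemma to extend it to a maximal consistent set $\Delta$ in $\axiomLKVF+\mathtt{EXT}$. By construction, $\Gbk \subseteq \Delta$, so $\Delta \in L$, and $\lnot\phi \in \Delta$ as required.

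There is no real obstacle here: this is a textbook existence lemma whose only mild subtlety is making sure the distribution of $K$ over finite conjunctions is derivable from the given axioms and rules, which it is from $\mathtt{K}$, $\mathtt{NEC}$, and $\mathtt{MP}$ alone. The new operators $\Kv$ and $\Kf$ play no role in this argument; they will matter only later when the truth lemma is extended to atomic value- and dependency-formulas, where Propositions \ref{kvkf} and \ref{45} take over.
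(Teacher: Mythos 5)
Your proof is correct and is exactly the standard existence-lemma argument that the paper itself invokes (the paper's proof just says ``a standard exercise using necessitation and axiom $\mathtt{K}$'' without spelling it out). The details you supply—consistency of $\Gbk \cup \{\lnot\phi\}$ via $\mathtt{NEC}$, the $\mathtt{K}$ axiom, and distribution of $K$ over finite conjunctions, followed by Lindenbaum extension—are precisely what that remark leaves to the reader.
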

\begin{proof}
    A standard exercise using necessitation and axiom $\mathtt{K}$.
\end{proof}

Now we can prove the truth lemma:         
\begin{lemma}\label{Truth1}
    For all $\lr{\Delta, C, i}\in W,
    \phi\in \logicLKVF$,
    $\phi\in \Delta \LRA \modelM, \lr{\Delta, C, i}\vDash \phi$.
\end{lemma}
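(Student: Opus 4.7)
The plan is to prove the truth lemma by induction on the complexity of $\phi$. The Boolean cases ($\top$, $p$, $\phi \land \psi$, $\lnot \phi$) are routine, following from the definition of $U$ and the maximal consistency of each $\Delta \in L$, so the real work lies in the atomic $\Kv/\Kf$ cases and in $K\psi$.

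For $\phi = \Kv(d)$, the key observation is that by construction $V(\lr{\Delta, C, i}, d) = V_p(\lr{C, i}, d)$ does not depend on the $\Delta$ coordinate, so the semantics of $\Kv(d)$ at $\lr{\Delta, C, i}$ reduces to how $V_p$ varies over $M_\Gamma \times \llrr{0, 1}$. I would then apply Proposition \ref{45} to replace $\Delta$ by $\Gamma$, and then Proposition \ref{kvkf}(1,2) to connect the question to $V_p$. The $(\Rightarrow)$ direction is immediate from part (1). For $(\Leftarrow)$, part (2) provides $\lr{X, j}, \lr{X', j'} \in M_\Gamma \times \llrr{0, 1}$ on which $V_p$ disagrees about $d$; these lift to the worlds $\lr{\Gamma, X, j}, \lr{\Gamma, X', j'} \in W$ (since $\Gamma \in L$) that witness failure of $\Kv(d)$. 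The case $\phi = \Kf(C, d)$ is entirely parallel, using parts (3) and (4) instead.

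For $\phi = K\psi$, I would run the standard S5 argument. If $K\psi \in \Delta$, Proposition \ref{45} gives $K\psi \in \Gamma$, so $\psi \in \Gbk \subseteq \Delta'$ for every $\Delta' \in L$; the induction hypothesis then yields $\modelM, \lr{\Delta', C', i'} \vDash \psi$ for every world in $W$. Conversely, if $K\psi \notin \Delta$ then by Proposition \ref{45} $K\psi \notin \Gamma$, and Proposition \ref{notK} produces $\Delta' \in L$ with $\lnot \psi \in \Delta'$; picking any $C' \in M_\Gamma$ and $i' \in \llrr{0, 1}$ gives a world $\lr{\Delta', C', i'} \in W$ refuting $\psi$ by the induction hypothesis, hence refuting $K\psi$ at $\lr{\Delta, C, i}$.

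The only real friction is in the atomic clauses: one must be careful that the universal quantifier over all $w' \in W$ in the semantics of $\Kv$ and $\Kf$ translates correctly into a universal quantifier over $M_\Gamma \times \llrr{0, 1}$. This is precisely why $V$ was set up to ignore the $\Delta$ coordinate, so that Proposition \ref{kvkf}, which speaks only about $M_\Gamma \times \llrr{0, 1}$, applies directly without any adjustment. Given this design, no new machinery beyond Propositions \ref{45}, \ref{notK}, and \ref{kvkf} is needed, and the induction goes through cleanly.
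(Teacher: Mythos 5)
Your proposal is correct and follows essentially the same route as the paper's own proof: induction on $\phi$, with Proposition \ref{45} transferring membership questions from $\Delta$ to $\Gamma$, Proposition \ref{kvkf} handling the $\Kv$ and $\Kf$ clauses via the $\Delta$-independence of $V$, and Proposition \ref{notK} supplying the counter-world for $K\psi$. Your explicit remark that $V$ ignores the $\Delta$ coordinate is exactly the design point the paper relies on implicitly.
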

\begin{proof}
    By induction on $\phi$, with the following possibilities:
    \begin{itemize}
        \item $\phi$ is a propositional letter or a boolean combination. This is standard. 
        \item $\phi = \Kv(d)$.
        Since $\Delta\in L$,
        by Proposition \ref{45},
        $\Kv(d)\in \Delta\LRA \Kv(d)\in\Gamma$.
        By Proposition \ref{kvkf},
        if $\Kv(d)\in\Gamma$ then
        $$
        V(\lr{\Theta, D, j}, d) =
        V_p(\lr{D, j}, d) =
        V_p(\lr{D', j'}, d) =
        V(\lr{\Theta', D', j'}, d)
        $$
        for all $\lr{\Theta, D, j}, \lr{\Theta', D', j'}\in W$.
        If $\Kv(d)\not\in\Gamma$,
        by Proposition \ref{kvkf} again,
        there exists $\lr{D, j}, \lr{D', j'}\in M\times\llrr{0, 1}$ such that
        $$
        V(\lr{\Gamma, D, j}, d) = V_p(\lr{D, j}, d) \not=
        V_p(\lr{D', j'}, d) = V(\lr{\Gamma, D', j'}, d).
        $$
        As such,
        $$
        \Kv(d)\in \Delta
        \LRA
        \Kv(d)\in\Gamma
        \LRA
        \modelM, \lr{\Delta, C, i}\vDash \phi .
        $$
        \item $\phi = \Kf(D, d)$.
        Similar to the last one.
        By Proposition \ref{45},
        $\Kf(D, d)\in \Delta\LRA \Kf(D, d)\in\Gamma$.
        By Proposition \ref{kvkf},
        $\Kf(D, d)\in\Gamma\LRA \modelM,
        \lr{\Delta, C, i}\vDash\Kf(D, d)$.
        \item $\phi = K\psi$.
        By Proposition \ref{45},
        $K\psi\in \Delta \LRA K\psi\in\Gamma$.
        If $K\psi\in\Gamma$, then $\psi\in\Gbk$,
        so for all $\lr{\Theta, D, j}\in W$,
        as $\Theta\in L$, $\psi\in \Theta$.
        By the induction hypothesis,
        $\modelM, \lr{\Theta, D, j}\vDash\psi$.
        Thus, $\modelM, \lr{\Delta, C, i}\vDash K\psi$.

        On the other hand,
        if $K\psi\not\in\Gamma$,
        by Proposition \ref{notK},
        there exists $\Theta\in M$ such that $\lnot\phi\in \Theta$.
        By the induction hypothesis,
        $\modelM, \lr{\Theta, \emptyset, 0}\vDash\lnot\psi$.
        So $\modelM, \lr{\Delta, C, i}\not\vDash K\psi$.
        To sum up, $K\psi\in\Gamma\LRA \modelM, \lr{\Delta, C, i}\vDash K\psi$.
    \end{itemize}
\end{proof}
From this proposition,
we know that for all $\phi\in\Gamma$,
$\modelM, \lr{\Gamma, \emptyset, 0}\vDash\phi$.
As the consistent set $A$ we chose at the very beginning
is contained in $\Gamma$,
$\modelM, \lr{\Gamma, \emptyset, 0}\vDash A$,
which brings us:    
\begin{theorem}
    Given $|\BG| \ge |\powerset_f(\BQ)\times\llrr{0,1}|$ and
    $\BF = \bigcup\llrr{\BG^{\BG^i}\ |\ i\in \mathbf{N}}$,
    $\axiomLKVF+\mathtt{EXT}$ axiomatizes $\logicLKVF$.
\end{theorem}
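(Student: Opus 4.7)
The theorem combines soundness and completeness, and nearly all the hard work has been handled by the earlier propositions, so the plan is largely to assemble them. For soundness I would verify that $\BF = \bigcup\{\BG^{\BG^i} \mid i \in \mathbb{N}\}$ satisfies the hypotheses of Proposition~\ref{SoundCondition}: every projection $id_{i,j}$ lies in $\BF$ by inspection, and the composition of total functions on $\BG$ is again such a function, so $\BF$ is composition-closed. This yields soundness of the base system $\axiomLKVF$. For the additional axiom $\mathtt{EXT}: \Kv(d)\to\Kf(C,d)$, I would argue semantically: if $\Kv(d)$ holds then $d$ is constant across the world set, say with value $x$, and the constant function returning $x$, which belongs to $\BF$, witnesses $\Kf(C,d)$ for any finite $C$.

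For completeness I would start from an arbitrary set $A$ of formulas consistent in $\axiomLKVF+\mathtt{EXT}$ and apply Lindenbaum's lemma to extend it to a maximal consistent set $\Gamma \supseteq A$. The cardinality hypothesis $|\BG| \geq |\powerset_f(\BQ) \times \{0,1\}|$ combined with $|M_{\Gamma}| \leq |\powerset_f(\BQ)|$ guarantees the required injection $g : M_{\Gamma} \times \{0,1\} \to \BG$, which in turn defines the auxiliary valuation $V_p$. I would then build the canonical model $\modelM = \lr{W,U,V}$ on the three-coordinate world set $W = L \times M_{\Gamma} \times \{0,1\}$ exactly as in the text, and invoke Lemma~\ref{Truth1} at the distinguished world $\lr{\Gamma, \emptyset, 0}$: since $A \subseteq \Gamma$, every formula of $A$ is true there, which gives completeness.

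The genuine difficulty---already resolved by Proposition~\ref{kvkf}---lies in arranging a single model that simultaneously validates every positive $\Kf(C,d) \in \Gamma$ while refuting every $\lnot\Kf(C,d) \in \Gamma$. The three-coordinate world structure is engineered for exactly this: the closed-set coordinate $X \in M_{\Gamma}$ ensures that variables outside the dependency hull of a finite $C$ may diverge from those inside it, and the binary flag supplies the extra degree of freedom that makes each $\lnot\Kf(C,d)$ refutable by a pair of worlds agreeing on $C$ but differing on $d$. Given these ingredients, I do not expect a further obstacle at the level of the theorem itself; it amounts to noting that the canonical model has already been successfully built and reading off the consequence of the truth lemma at the chosen witness world.
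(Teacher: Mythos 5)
Your proposal is correct and follows essentially the same route as the paper: soundness via Proposition~\ref{SoundCondition} together with a direct semantic witness for $\mathtt{EXT}$ (the paper phrases this through the functionality reformulation of $\Kf$ in the full domain, but the constant function is the same witness), and completeness by assembling the Lindenbaum extension, the injection $g$, the valuation $V_p$, the world set $L\times M_{\Gamma}\times\llrr{0,1}$, and the truth lemma evaluated at $\lr{\Gamma,\emptyset,0}$. No substantive difference from the paper's argument.
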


\section{Minimal Function Domain}

In Proposition \ref{SoundCondition}
we proved the soundness condition for $\axiomLKVF$.
Notice that the minimal function domain that satisfies this soundness condition is 
$$\BF = \llrr{id_{i,j}\sep i,j\in\mathbb{N}, 0 < i \le j}.$$
In this section,
we consider the axiomatization of the validities of $\logicLKVF$ with this $\BF$.
Here, two axioms besides our base system $\axiomLKVF$ are valid:
$$
\begin{array}{ll}
  \mathtt{CHOO} & \Kf(C,d)\to\bigvee_{c\in C}\Kf(c,d),\\
  \mathtt{EQU} & \Kf(c,d)\to \Kf(d,c).
\end{array}
$$
The validity of the first axiom is justified by:
$$d = id_{i,j}(c_1,c_2,\ldots,c_j) = c_i = id_{1,1}(c_i),$$
and notice that when $C = \emptyset$,
it degenerates to $\Kf(\emptyset, d) \to \bot$
or equivalently $\lnot\Kf(\emptyset, d)$,
which is true because no zero-ary function exists in $\BF$.
This also means that $\mathtt{EXT}$ is unsound in this case,
because even if $\Kv(d)$ is true,
$\Kf(\emptyset, d)$ is false regardless.
So $\Kv(d)\to \Kf(C, d)$ is in general false.

The validity of the second axiom follows from 
$$d = id_{1,1}(c) = c \quad \Rightarrow \quad c = d = id_{1,1}(d).$$
Thus, $\axiomLKVF + \mathtt{CHOO} + \mathtt{EQU}$ is sound.
Given these two axioms and the fact that
$\BF$ consists only of projection functions,
$\Kf(c, d)$ is actually talking about the equality of $c, d$ over all possible worlds,
even though the value might not be known.
This motivates the construction of the equivalence relation by $\Kf(c,d)$ used below. 

Now we turn to the proof of the completeness of $\axiomLKVF + \mathtt{CHOO} + \mathtt{EQU}$.
Again, given a consistent set $A$,
our plan is that we first extend it to a maximal consistent set $\Gamma$,
then deal with its \textit{de re} knowledge and propositional knowledge separately,
and finally take their Cartesian product to obtain a model of $\Gamma$.

First,
we partition $\BQ$ into equivalence classes
with equivalence relation $\sim$ defined by
$$c\sim d \LRA \Kf(c,d)\in\Gamma. $$
Its reflexivity, symmetry and transitivity follow from
the axioms $\mathtt{PROJ}$, $\mathtt{EQU}$, and $\mathtt{TRAN}$.
Indeed, if we use the $C^{+\Gamma}$ and $M_{\Gamma}$ construction,
$M_{\Gamma}$ will contain precisely those partitions and their unions.
Every maximally consistent set, or a ``world'',
naturally gives rise to such an equivalence relation on $\BQ$.

For every $c\in \BQ$,
define $[c] = \llrr{d \sep c\sim d}$,
and for every $C\subseteq \BQ$,
define $[C] = \lr{[c]\sep c\in C}$,
the collection of the equivalence classes
which contain at least one of its elements.
In particular,
$[\Gbkv] = \lr{[c]\sep \Kv(c)\in \Gamma}$.

Now,
if $|\BG| \ge |\BQ| \ge |[\BQ]|$,
then there will be two injections from $[\BQ]$ to $\BG$, $u$ and $v$,
such that 
$$u([c]) = v([c]) \LRA [c]\in [\Gbkv].$$
For example,
we can let $u$ be any injection and then
make a rotation over the function values of $u$ on $[\BQ]\backslash[\Gbkv]$
to obtain $v$ in case of $\BQ$ being finite,
or let $v(d)$ be the successor of $u(d)$ for $d\in [\BQ]\backslash[\Gbkv]$
in case of $\BQ$ being infinite (assuming it can be well ordered).
We do not need to seek more valuations of variables
to prove the truth lemma in this case
or to instantiate the ignorances of the knowledge about values in $\Gamma$.
Any one of them is capable of refuting $\Kf(C, d)\not\in\Gamma$
and together they instantiate $\Kv(d)\not\in\Gamma$. 

Defining $V_p$ as a function from $\llrr{u,v}\times\BQ$ to $\BG$ by $V_p(t,d) = t([d])$,
the following proposition is true:
\begin{proposition}\label{kvkfMin}
    For any $d\in \BQ, C\subseteq_{fin} \BQ$:
    \begin{enumerate}
        \item if $\Kv(d) \in \Gamma$,
        $\exists x\in G, \forall t\in \llrr{u,v}, V_p(t, d) = x$

        \item if $\Kv(d)\not\in\Gamma$,
        $\exists t, t'\in\llrr{u,v}, V_p(t,d)\not= V_p(t', d)$

        \item if $\Kf(C, d)\in\Gamma$,
        $\exists f\in \BF,\forall t\in\llrr{u,v},f(V_p(t,C))=V_p(t, d)$

        \item if $\Kf(C, d) \not\in\Gamma$,
        $\forall f\in \BF,  \exists t\in \llrr{u,v},  f(V_p(t, C)) \not= V_p(t, d)$.
    \end{enumerate}
\end{proposition}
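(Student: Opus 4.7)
The plan is to verify the four clauses by leveraging the two key features of the construction of $u$ and $v$: their injectivity on $[\BQ]$, and the defining equivalence $u([c]) = v([c]) \LRA [c] \in [\Gbkv]$. Before addressing the clauses, I would record the small observation that $[d]\in[\Gbkv]$ if and only if $d\in\Gbkv$. The left-to-right direction uses $\mathtt{VF}$: if $[d]=[c]$ for some $c$ with $\Kv(c)\in\Gamma$, then $c\sim d$ gives $\Kf(c,d)\in\Gamma$, and $\mathtt{VF}$ forces $\Kv(d)\in\Gamma$. This lets me translate the condition on $u$ and $v$ from equivalence classes back to variables directly.

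For clause (1), if $\Kv(d)\in\Gamma$ then $[d]\in[\Gbkv]$, so $u([d]) = v([d])$; take that common value as $x$. For clause (2), if $\Kv(d)\notin\Gamma$ then $[d]\notin[\Gbkv]$, so by the defining property of $u$ and $v$ we have $u([d])\neq v([d])$, and $t=u, t'=v$ witness the claim.

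For clause (3), I would first note that $C$ must be nonempty, since $\mathtt{CHOO}$ with $C=\emptyset$ yields $\lnot\Kf(\emptyset,d)$ as a theorem. Given $\Kf(C,d)\in\Gamma$, $\mathtt{CHOO}$ supplies some $c_i\in C$ with $\Kf(c_i,d)\in\Gamma$; thus $c_i\sim d$ and $[c_i]=[d]$. For each $t\in\{u,v\}$,
\[
    V_p(t,d) = t([d]) = t([c_i]) = V_p(t,C)[i],
\]
so the projection $f = id_{i,|C|}\in\BF$ is the required witness.

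Clause (4) is the main obstacle and needs two cases. When $C=\emptyset$, the claim is vacuous because the minimal $\BF$ contains no $0$-ary functions, so there is nothing to refute. When $C = (c_1,\ldots,c_n)$ is nonempty, every $f\in\BF$ of arity $n$ has the form $id_{i,n}$ for some $i$, and $f(V_p(t,C)) = t([c_i])$. If $t([c_i]) = t([d])$ were to hold for some $t\in\{u,v\}$, then injectivity of $t$ on $[\BQ]$ would give $[c_i] = [d]$, hence $\Kf(c_i,d)\in\Gamma$. Combining this with $\Kf(C,c_i)\in\Gamma$ from $\mathtt{PROJ}$ and applying $\mathtt{TRAN}$ with $D=\{c_i\}$, we would deduce $\Kf(C,d)\in\Gamma$, contradicting the hypothesis. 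Hence for each such $f$ we can pick $t=u$ (say) and obtain $f(V_p(u,C))\neq V_p(u,d)$. The delicate point throughout this clause is to remember that $\BF$ is exactly the set of projections, so that the universal quantifier over $\BF$ reduces to a finite case-analysis on the index $i$, and then $\mathtt{TRAN}$ is the right tool to close the contradiction.
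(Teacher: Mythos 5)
Your proof is correct and follows essentially the same route as the paper's: clauses (1)--(2) from the defining property of $u$ and $v$, clause (3) via $\mathtt{CHOO}$ and a projection witness, and clause (4) via injectivity of $u$ together with $\mathtt{PROJ}$ and $\mathtt{TRAN}$. Your explicit justification of $[d]\in[\Gbkv]\LRA\Kv(d)\in\Gamma$ using $\mathtt{VF}$ is a small detail the paper leaves implicit, but it does not change the argument.
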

\begin{proof}
    The first two parts are immediate from the definition of $u,v$:
    $\Kv(d)\in\Gamma \LRA [d]\in[\Gbkv] \LRA u([d]) = v([d]) \LRA V_p(u,d) = V_p(v,d)$.

    For the third property,
    suppose $\Kf(C,d)\in\Gamma$ and
    enumerate $C$ by $c_1,\ldots,c_j$.
    By axiom $\mathtt{CHOO}$ and the maximality of $\Gamma$,
    there exists $i$ such that $\Kf(c_i,d)\in\Gamma$ and thus $[d] = [c_i]$.
    Now, for every $t\in\llrr{u,v},
    V_p(t, C) = \lr{[c_1], [c_2],\ldots,[c_j]}$,
    so $[d] = id_{i,j}(V_p(t, C))$ and we see that
    the functional relation between $C,d$ is $id_{i,j}$.

    For the last one,
    suppose $\Kf(C,d)\not\in\Gamma$.
    It follows that $[d]\not\in[C]$ because otherwise,
    $[d]\in[C]$ and there exists $c\in C, [d] = [c]$,
    hence $\Kf(c,d)\in\Gamma$.
    By axiom $\mathtt{PROJ}$,
    $\Kf(C,c)\in\Gamma$,
    and then by axiom $\mathtt{TRAN}$,
    $\Kf(C,d)\in\Gamma$,
    which contradicts the assumption.
    Again enumerate $C = \lr{c_1,\ldots,c_j}$.
    Since $u$ is injective and $[d]\not\in [C]$,
    for all $c_i\in C$, $u([d]) \not= u([c_i])$.
    Thus, for every $j$-ary function $id_{i,j} \in \BF$,
    $id_{i,j}(V_p(u,C)) = u([c_i]) \not= u([d])$.
    Actually we can use $v$ here as well.
    The reason we need both of them is that
    we need to instantiate $\lnot\Kv(d)$ for $d\not\in\Gbkv$.
\end{proof}

To build a model for $\Gamma$, define     
$$
\begin{array}{rll}
  L & = & \llrr{\Delta \sep \Delta \textrm{\ is\ a\ maximal\ consistent\ set}\
          ,\Gbk\subseteq \Delta} \\
  W & = & L\times \llrr{u,v} \\
  U(\lr{X, t}, p) & = & [p \in X] \\
  V(\lr{X, t}, d) & = & V_p(t, d)\\
  \modelM & = & \lr{W, U, V}.
\end{array}$$
Then we have the following truth lemma:
\begin{lemma}
    For all $\lr{\Gamma, t}\in W$,
    $\lr{\Gamma, t} \vDash \phi$ if and only if $\phi\in\Gamma$.
\end{lemma}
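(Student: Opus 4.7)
The plan is to establish the biconditional by induction on $\phi$, following exactly the template of Lemma \ref{Truth1} but adapted to the simplified world structure $L\times\llrr{u,v}$ appropriate for the minimal function domain. The base and boolean cases are immediate: for $\phi = \top$ or a propositional letter $p$, the definition of $U$ together with the maximal consistency of the first component of the world yields the equivalence, and the cases for conjunction and negation follow from the inductive hypothesis in the usual way.

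For $\phi = \Kv(d)$ and $\phi = \Kf(C,d)$, I would first observe that Proposition \ref{45} carries over verbatim to the present axiom system $\axiomLKVF + \ttt{CHOO} + \ttt{EQU}$, since its proof uses only $\ttt{KV4}$, $\ttt{KV5}$, $\ttt{KF4}$, $\ttt{KF5}$, $\mathtt{4}$, and $\mathtt{5}$, all of which remain available. Hence for any $\Delta\in L$, membership of $\Kv(d)$, $\Kf(C,d)$, or $K\psi$ in $\Delta$ is equivalent to membership in $\Gamma$. Since the valuation $V$ at a world $\lr{\Delta,t}$ depends only on $t$ and $d$, not on $\Delta$, the $\Kv$ case then reduces to parts 1--2 of Proposition \ref{kvkfMin}, and the $\Kf$ case reduces to parts 3--4 of the same proposition.

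For $\phi = K\psi$, I would argue as follows. If $K\psi \in \Gamma$, then $\psi\in\Gbk$, and since $\Gbk\subseteq\Theta$ for every $\Theta\in L$, the inductive hypothesis gives $\modelM,\lr{\Theta,t'}\vDash\psi$ at every world, so $\modelM,\lr{\Delta,t}\vDash K\psi$. If $K\psi\not\in\Gamma$, I would invoke the obvious analog of Proposition \ref{notK} (whose proof needs only $\mathtt{K}$ and $\NEC$) to produce some $\Theta\in L$ with $\lnot\psi\in\Theta$; the inductive hypothesis then yields $\modelM,\lr{\Theta,u}\not\vDash\psi$, refuting $K\psi$ at $\lr{\Delta,t}$.

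I expect no serious obstacle: the genuinely nontrivial content of the completeness proof for this section has already been absorbed into Proposition \ref{kvkfMin}, which is what guarantees that the two-element index set $\llrr{u,v}$ simultaneously witnesses every $\Kv$ and $\Kf$ (non-)membership in $\Gamma$. With that proposition in hand, and with Proposition \ref{45} and the Proposition \ref{notK} analog handling the interaction between $\Gamma$ and arbitrary $\Delta\in L$, the truth lemma reduces to a routine inductive bookkeeping exercise on top of the standard S5 canonical-model argument.
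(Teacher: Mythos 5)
Your proposal is correct and follows essentially the same route as the paper, which itself proves this lemma by declaring it ``similar to that of Lemma \ref{Truth1}'' with Proposition \ref{kvkfMin} substituted for Proposition \ref{kvkf}; you have simply made explicit the details the paper leaves implicit (that Proposition \ref{45} and the analog of Proposition \ref{notK} carry over, and that $V$ at $\lr{\Delta,t}$ depends only on $t$). No gap.
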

\begin{proof}
    The proof is similar to that of Lemma \ref{Truth1}.
    The difference is that
    we need to use Proposition \ref{kvkfMin} instead of Proposition \ref{kvkf}.
\end{proof}

The completeness of $\logicLKVF + \mathtt{CHOO} + \mathtt{EQU}$ follows, so we conclude:
\begin{theorem}
    Given
    $|\BG| \ge |\BQ|,
    \BF = \lr{id_{i,j}\sep i,j\in\mathbb{N}, 0 < i \le j}$,
    $\axiomLKVF + $ axiomatizes $\logicLKVF$.
\end{theorem}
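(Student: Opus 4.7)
The plan is to establish both soundness and completeness of $\axiomLKVF + \mathtt{CHOO} + \mathtt{EQU}$ for $\logicLKVF$ under the stated parameters. Nearly all of the work has already been packaged into the preceding propositions and truth lemma, so at the theorem level I would mainly assemble these pieces.

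For soundness, I would first verify that $\BF = \{id_{i,j} \mid 0 < i \le j\}$ meets the hypotheses of Proposition \ref{SoundCondition}: the projection clause holds by the very definition of $\BF$, and a composition of projections is again a projection, since $id_{i,j}(id_{k_1,n}, \ldots, id_{k_j,n}) = id_{k_i, n}$, so closure under composition holds. This delivers soundness of $\axiomLKVF$. The soundness of $\mathtt{CHOO}$ and $\mathtt{EQU}$ was already argued in the paragraphs introducing those axioms, so I would simply cite those arguments.

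For completeness, I would take an arbitrary consistent set $A$, extend it via Lindenbaum to a maximal consistent $\Gamma$, and evaluate the constructed model $\modelM$ at the world $\lr{\Gamma, u}$. This world lies in $W$ because axiom $\mathtt{T}$ forces $\Gbk \subseteq \Gamma$, hence $\Gamma \in L$. The truth lemma then yields $\modelM, \lr{\Gamma, u} \vDash \phi$ iff $\phi \in \Gamma$, and since $A \subseteq \Gamma$, we get $\modelM, \lr{\Gamma, u} \vDash A$. Contrapositively, every formula not derivable in the system admits a countermodel whose $\BG, \BF$ satisfy the stated hypotheses, which is completeness.

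The real obstacle sits upstream, in Proposition \ref{kvkfMin} rather than the theorem itself. The subtle point is that because $\BF$ consists of nothing but projections, a single pair of valuations $u, v$ suffices: injectivity of $u$ alone refutes every $\Kf(C, d) \notin \Gamma$ by way of $[d] \notin [C]$, while the arrangement $u([c]) = v([c])$ precisely on $[\Gbkv]$ simultaneously refutes every $\Kv(d) \notin \Gamma$ and preserves every $\Kv(d) \in \Gamma$. The cardinality assumption $|\BG| \ge |\BQ| \ge |[\BQ]|$ is exactly what guarantees existence of such a pair. Once those observations are in hand, the theorem itself is a routine wrap-up.
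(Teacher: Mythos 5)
Your proposal is correct and follows essentially the same route as the paper: soundness via Proposition \ref{SoundCondition} plus the inline arguments for $\mathtt{CHOO}$ and $\mathtt{EQU}$, and completeness via Lindenbaum, the two-valuation model built from $u$ and $v$, and the truth lemma, with the real work residing in Proposition \ref{kvkfMin}. You also correctly read the theorem's garbled statement as referring to $\axiomLKVF + \mathtt{CHOO} + \mathtt{EQU}$.
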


\section{Intermediate Function Domain}

In the previous two sections,
we considered the minimal and the maximal function domains
subject to our soundness condition.
As we can see,
in both cases
the axiomatizations require some axioms besides the base system $\axiomLKVF$.
And those axioms are not very intuitive
if we intend to interpret $\Kf$ as ``knowing a/the functional dependency''.
In this section,
we show that we can construct a function domain such that if $\BF$ is set to it,
$\axiomLKVF$ will be complete and no extra axiom is needed.
The construction is somewhat artificial
but in the next section,
we can view this as just one step of a completeness proof at a higher level.

The main difficulty here is to refute the axiom scheme $\mathtt{EXT}$
used in the axiomatization of the full function domain case.
$\ttt{EXT}$ is validated in that case because whenever the value of a variable is known,
a constant function can be used to explain the functional dependency between
it and any other variables in all epistemic possibilities.
Thus, to refute this scheme as an axiom,
we must make sure that the function domain encodes information more than just functionality
so that we can refute $\Kf(c, d)$ even when functionality holds,
such as when $\Kv(d)$ is true.
The function domain to be constructed below will enable a suitably constructed model
to refute $\Kf(C,d)$ without ever looking into the functionality condition. 

To do this, we go to higher dimensions by assuming $\BG = 2^{\powerset_f(\BQ)}$,
interpreted as functions from the finite subsets of $\BQ$ to $\llrr{0, 1}$
or as a rather long sequence indexed by $\powerset_f(\BQ)$
where at each index (dimension) $C$ we can choose from $\llrr{0,1}$.
This is actually only a size requirement,
since so long as $|\BG| \ge |2^{\powerset_f(\BQ)}|$,
we can always embed $2^{\powerset_f(\BQ)}$ into $\BG $ by an injection.
For any $x\in\BG$ and $C\subseteq_{fin} \BQ$, we use $x[C]$ to retrieve the image of $C$ under $x$,
which will be $0$ or $1$.
Now we construct the intermediate $\BF$:

\begin{definition}
    Let $\BF$ be the collection of the functions $f$ satisfying the following constraints:
    where $y$ is $f(x_1, \ldots x_n)$, for all $C\subseteq_{fin} \BQ $,
    $$x_1[C] = x_2[C] = \ldots = x_n[C] = 0 \quad \Rightarrow \quad y[C] = 0.$$

    \noindent Alternatively, where
    $$\Lmax_n =
    \llrr{f\in\BG^{\BG^n} \sep
      \forall C\subseteq_{fin} \BQ,
      f(x_1, \ldots x_n)[C] \le \max(x_1[C], \ldots x_n[C])},$$
    with $\max() = 0$, define $\BF = \bigcup_{i\in\mathbb{N}} \Lmax_i$.
\end{definition}

Notice that the requirement is specified for all dimensions individually,
and they do not interfere with each other.
This allows us to do constructions and proofs for each dimension separately. 

Now we can check that this $\BF$ satisfies the soundness condition.
Projection functions are all included in $\BF $ because they all satisfy the above constraint:
for any $C\subseteq_{fin}\BQ $,
either $x_i[C] = y[C] = 1$, where the antecedent and the consequent are both false,
or $x_i[C] = y[C] = 0$, where they are both true.
For compositionality, let $h = f(g_1, \ldots g_n)$.
If all inputs to $h$ are $0$ at any dimension $C$,
then since $g_1, \ldots g_n \in \BF$,
they evaluate to $0$ at dimension $C$.
Then all inputs to $f$ are $0$ at this dimension $C$.
So as $f\in\BF $, it evaluates to $0$ as well.
Thus, $h$ is in $\BF$.

To prove the completeness of $\axiomLKVF$ with respect to $\logicLKVF$ with this new function domain $\BF$,
again the satisfiability of any maximal consistent set $\Gamma $ is required,
and the crucial step is still the construction of a set of valuations
such that the formulas of the form $\Kv(d), \lnot\Kv(d), \Kf(C, d)$, and $\lnot\Kf(C, d)$ in $\Gamma $ are satisfied.
Indeed, for this purpose, we only need two valuations,
a situation similar to that in the case of the minimal function domain.
This is because when $\lnot\Kf(C, d)\in\Gamma$,
we are refuting $\Kf(C, d)$ not by a failure of functionality
but by a failure of conformation to $\BF$.
Breaking functionality requires at least two possible value assignments,
but if $\BF$ says no, a single possibility is too many.
Recall the $C^{+\Gamma}$ we used in the previous two cases,
which is defined as $\llrr{d\in\BQ \sep \Kf(C, d) \in \Gamma}$.
Now we need to define a slightly different $M_{\Gamma}$:
$$ \llrr{C^{+\Gamma} \sep C\subseteq_{fin} \BQ} \cup \llrr{\Gbkv}.$$
This is the collection of all finitely generated closed sets plus $\Gbkv$.
We need this extra union since axiom $\ttt{EXT}$ is not available now,
which means $\Gbkv$ is not automatically contained in any $C^{+\Gamma}$,
and it is quite possible that $\Gbkv$ is not finitely generated.
But still, $M_{\Gamma}$ has a cardinality no larger than $\powerset_f(\BQ)$,
since if $\BQ$ is finite, $\powerset_f(\BQ)$ contains all subsets of $\BQ$,
and if infinite,
$\powerset_f(\BQ)$ is also infinite
and adding one more element into it does not increase its cardinality.
Thus, there is still a surjection $g$ from $\powerset_f(\BQ)$ to $M_{\Gamma}$.
We can think of this $g$ as a pseudo $(\cdot)^{+\Gamma}$ function,
and it does not matter which surjection we use for $g$.
Now we can specify the two valuations we need:
\begin{definition}
    Let $g$ be any surjection from $\powerset_f(\BQ)$ to $M_{\Gamma}$.
    Define $V_0, V_1:\powerset_f(\BQ) \to\BG$ such that for all $d\in\BQ, C\subseteq_{fin}\BQ$,
    \[V_0(d)[C] = \left\{
            \begin{array}{ll}
              0 & \textrm{if\ } d\in g(C) \\
              1 & \textrm{if\ } d\not\in g(C),
            \end{array}
        \right.\]
    
    \[V_1(d)[C] = \left\{
            \begin{array}{ll}
              V_0(d)[C] & \textrm{if\ } g(C)\not=\Gbkv \\
              0 & \textrm{if\ } g(C) = \Gbkv.
            \end{array}
        \right. \]
    
\end{definition}
The use of $V_0$ is to refute $\Kf(C, d)$ if $\lnot\Kf(C, d)\in\Gamma$,
and the use of $V_1$ is to refute $\Kv $ if $\lnot\Kv(d)\in \Gamma$. 
Now we prove this in detail:

\begin{proposition}\label{kfInt}
    If $\Kf(C,d)\in\Gamma$,
    then there exists $f\in\BF $ such that
    for $i\in\llrr{0,1}$, $f(V_i(C)) = v_i(d)$.
    If $\lnot\Kf(C,d)\in\Gamma$,
    then for all $f\in\BF$, $f(V_0(C)) \not= V_0(d)$.
\end{proposition}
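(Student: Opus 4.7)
The plan is to handle each implication using the coordinate-wise structure of $\BG = 2^{\powerset_f(\BQ)}$. Since membership in $\BF = \bigcup_n \Lmax_n$ is a property that can be checked separately at each dimension $D \subseteq_{fin} \BQ$, I will construct the witness $f$ by choosing, for each $D$, a boolean function $f_D : \{0,1\}^n \to \{0,1\}$ with $f_D(0,\ldots,0) = 0$, and then setting $f(x_1,\ldots,x_n)[D] := f_D(x_1[D],\ldots,x_n[D])$; any $f$ built this way automatically lies in $\Lmax_n \subseteq \BF$.

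For the positive direction, enumerate $C$ as $c_1,\ldots,c_n$ and suppose $\Kf(C,d) \in \Gamma$. At each dimension $D$ I need $f_D$ to map $(V_i(c_1)[D],\ldots,V_i(c_n)[D])$ to $V_i(d)[D]$ for both $i \in \{0,1\}$, while also satisfying zero-preservation. The crux will be verifying that these at most three input-output requirements are jointly consistent, i.e., never demand different outputs for the same input. I will split on $g(D)$. When $g(D) = \Gbkv$, the $i=1$ input is already $(0,\ldots,0)$ and its required output is $0$, matching zero-preservation; and if additionally $V_0(c_j)[D] = 0$ for every $j$, then every $c_j \in \Gbkv$, so axiom $\ttt{VF}$ applied to $\Kf(C,d) \in \Gamma$ yields $d \in \Gbkv$ and hence $V_0(d)[D] = 0$. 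When $g(D) = E^{+\Gamma}$ for some finite $E$, the definitions of $V_0$ and $V_1$ coincide at $D$, so the two $i$-constraints collapse into one; and if that input is all zeros, then $\Kf(E,c_j) \in \Gamma$ for every $j$, so by axiom $\ttt{TRAN}$ (using $\Kf(C,d)$) we obtain $\Kf(E,d) \in \Gamma$, i.e., $V_0(d)[D] = 0$. With consistency established at each $D$, I can extend $f_D$ arbitrarily to the remaining inputs.

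For the negative direction, I will exploit the surjectivity of $g$ onto $M_\Gamma$. Since $\lnot\Kf(C,d) \in \Gamma$ means $d \notin C^{+\Gamma}$, and $C^{+\Gamma} \in M_\Gamma$, I pick $D \in \powerset_f(\BQ)$ with $g(D) = C^{+\Gamma}$. At this dimension, $C \subseteq C^{+\Gamma}$ forces $V_0(c_j)[D] = 0$ for every $j$, while $d \notin C^{+\Gamma}$ forces $V_0(d)[D] = 1$. Any $f \in \BF$ of arity $n$ then has $f(V_0(C))[D] = 0$ by the $\Lmax_n$ constraint (using the convention $\max() = 0$ when $n = 0$), so $f(V_0(C)) \neq V_0(d)$.

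The main obstacle is the consistency check in the positive direction, especially the case $g(D) = \Gbkv$, which depends essentially on $\Gbkv$ having been manually inserted into $M_\Gamma$: without that extra element, there would be no dedicated dimension recording the known-value variables, and coordinating the interaction between $\Kv$-style and $\Kf$-style information in $\Gamma$ would be noticeably harder. Once this case is rescued by $\ttt{VF}$ and the complementary case is handled by $\ttt{TRAN}$, the remainder of the argument is essentially bookkeeping.
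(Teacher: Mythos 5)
Your proof is correct and follows essentially the same dimension-by-dimension strategy as the paper: the negative direction is identical, and in the positive direction the paper simply names the witness explicitly at each dimension $D$ (the constant $0$ when $d\in g(D)$, a projection onto some $c_p\notin g(D)$ otherwise, with $\ttt{VF}$ and $\ttt{TRAN}$ entering through the closedness of $g(D)$), whereas you verify consistency of the required input--output pairs and extend arbitrarily. The only nitpick is that your second case should read ``$g(D)\ne\Gbkv$'' rather than ``$g(D)=E^{+\Gamma}$ for some finite $E$'', since $E^{+\Gamma}$ could equal $\Gbkv$, in which case $V_0$ and $V_1$ need not agree at $D$; with the cases split that way, your argument goes through verbatim.
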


\begin{proof}
    To prove the first claim,
    assume $\Kf(C,d)\in\Gamma $ with $C$ enumerated by $c_1, \ldots c_n$.
    We will construct a function $f \in \BF $ that works in both $V_0$ and $V_1$:
    for all $D\subseteq_{fin} \BQ$,
    $V_0(d)[D] = f(V_0(C))[D]$ and $V_1(d)[D] = f(V_1(C))[D]$.
    Obviously this construction should be done dimension by dimension.
    For any $D\subseteq_{fin} \BQ$, the possibilities are:
    \begin{itemize}
        \item $d \in g(D)$.
        Thus, by definition, $V_0(d)[D] = 0$.
        $V_1(d)[D] = 0$ as well since the only change happens when $D = \Gbkv$,
        and even in that case, only $1$ turns to $0$ and not vice versa.
        So we can define $f(x_1, \ldots x_n)[D] = 0$.
        Then $V_0(d)[D] = f(V_0(C))[D]$ and $V_1(d)[D] = f(V_1(C))[D]$,
        regardless of what $V_0(C)$ and $V_1(C)$ are. 
        \item $d\not\in g(D)$.
        Since $g(D)$ is closed and $\Kf(C, d)\in \Gamma$,
        $C\not\subseteq g(D)$.
        Find $c_p\not\in g(D)$.
        Define $f(x_1, \ldots x_n)[D] = x_p[D]$.
        This definition satisfies the requirement of $\BF$.
        And it works for $V_0$ because $v_0(d)[D] = V_0(c_p)[D] = 1$ (both $d, c_p$ are outside $g(D)$).
        It also works for $V_1$ because their values change to $0$ together if $g(D) = \Gbkv$.
    \end{itemize}

    To prove the second claim,
    recall that $C^{+\Gamma} = \llrr{d \sep \Kf(C,d)\in\Gamma}$
    is closed under $\Kf$ in $\Gamma$ and
    contains $C$ by axioms $\ttt{TRAN}$ and $\ttt{PROJ}$.
    Now since $\Kf(C, d)\not\in\Gamma$, $d\not\in C^{+\Gamma}$.
    As $g$ is a surjection from $\powerset_f(\BQ)$ to $M_{\Gamma}$,
    there exists $D\subseteq_{fin} \BQ$ such that $g(D) = C^{+\Gamma}$.
    Thus, by the definition of $V_0$,
    $V_0(d)[D] = 1$, while for all $c\in C \subseteq C^{+\Gamma} = g(D)$, $V_0(c)[D] = 0$.
    Hence $V_0(d)[D] > \max(V_0(C)[D])$,
    which makes it impossible to find a function $f\in\BF$ such that $f(V_0(C)) = V_0(d)$.
\end{proof}

\begin{proposition}\label{kvInt}
    If $\Kv(d) \in\Gamma$, then $V_0(d) = V_1(d)$.
    If $\Kv(d)\not\in\Gamma$, then $V_0(d) \not= V_1(d)$.
\end{proposition}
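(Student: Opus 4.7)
The plan is to reduce everything to a dimension-by-dimension comparison of $V_0(d)$ and $V_1(d)$ as elements of $\BG = 2^{\powerset_f(\BQ)}$, since equality of these two functions is pointwise equality at each index $C \subseteq_{fin} \BQ$. The crucial observation is that $V_1(d)[C]$ and $V_0(d)[C]$ can only disagree at indices $C$ for which $g(C) = \Gbkv$: at any other $C$ the two definitions literally coincide, and even at the special indices the only way they can disagree is when $V_0(d)[C] = 1$ gets overwritten to $V_1(d)[C] = 0$. Thus the whole proposition is controlled by the behaviour of $V_0(d)[C]$ precisely at those $C$ mapped by $g$ to $\Gbkv$.

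For the first direction, I would assume $\Kv(d) \in \Gamma$, so that $d \in \Gbkv$ by definition. Then for every $C$ with $g(C) = \Gbkv$, we have $d \in g(C)$, which forces $V_0(d)[C] = 0$ by the first clause of the definition of $V_0$, and hence $V_1(d)[C] = 0 = V_0(d)[C]$. At every other index $C$, the definition of $V_1$ simply copies $V_0$. Therefore $V_0(d)[C] = V_1(d)[C]$ for all $C$, giving $V_0(d) = V_1(d)$.

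For the converse direction, suppose $\Kv(d) \notin \Gamma$, so $d \notin \Gbkv$. Here I want to exhibit a single index $C$ witnessing the inequality. Since $\Gbkv$ was explicitly added to $M_\Gamma$ in its construction and $g\colon \powerset_f(\BQ) \to M_\Gamma$ is a surjection, there is at least one $C \subseteq_{fin} \BQ$ with $g(C) = \Gbkv$. Pick any such $C$: then $d \notin g(C)$, so $V_0(d)[C] = 1$ by the second clause of $V_0$, whereas $V_1(d)[C] = 0$ by the special-case clause of $V_1$. Hence $V_0(d) \neq V_1(d)$.

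There is no real obstacle in this proof; the only thing one has to be mildly careful about is that $\Gbkv$ actually belongs to the image of $g$, which is precisely why the definition of $M_\Gamma$ was enlarged by $\llrr{\Gbkv}$ in the absence of $\ttt{EXT}$. This also clarifies the design choice behind $V_1$: it is tailored to differ from $V_0$ only at the $\Gbkv$-indices, which is exactly what is needed to separate $\Kv$ from $\lnot \Kv$ without disturbing the functional-dependency bookkeeping already handled by $V_0$ in Proposition~\ref{kfInt}.
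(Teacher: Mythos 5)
Your proof is correct and follows essentially the same route as the paper's: a pointwise comparison over indices $C$, handling the $g(C) = \Gbkv$ indices via $d \in \Gbkv$ versus $d \notin \Gbkv$, and using the surjectivity of $g$ together with the explicit inclusion of $\Gbkv$ in $M_\Gamma$ to produce the witnessing index in the second direction. Nothing is missing.
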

\begin{proof}
    If $\Kv(d) \in \Gamma$, then $d\in\Gbkv$.
    Now for any $C\subseteq_{fin} \BQ $,
    if $g(C) \not= \Gbkv$,
    then $V_1(d)[C] = V_0(d)[C]$ by definition.
    If $g(C) = \Gbkv$,
    $V_1(d)[C] = 0$, but $V_0(d)[C] = 0$ as well since $d\in\Gbkv$.
    Thus, $V_0(d) = V_1(d)$.

    If $\Kv(d) \not\in\Gamma$, $d\not\in\Gbkv$.
    Since we explicitly added $\Gbkv$ to  $\Gamma$, $\Gbkv\in M_{\Gamma}$,
    and we can find a $C\subseteq_{fin} \BQ$ such that $g(C) = \Gbkv$.
    Then, using the definition of $V_0$ and $V_1$,
    we know $V_0(d)[C] = 1$ but $V_1(d)[C] = 0$,
    because $g(C) = \Gbkv$ and we assumed $d\not\in\Gbkv$.
    Thus, $V_1(d) \not= V_0(d)$.
\end{proof}

Based on the previous two propositions, 
we can build a model for $\Gamma$ by defining
$$
\begin{array}{rll}
  L & = & \llrr{\Delta\sep \Delta\textrm{\ is\ a\ maximal\ consistent\ set},\Gbk\subseteq \Delta} \\
  W & = & L\times \llrr{0,1} \\
  U(\lr{X, t}, p) & = & [p \in X] \\
  V(\lr{X, t}, d) & = & V_t(d)\\
  \modelM & = & \lr{W, U, V}.
\end{array}
$$
With a proof which is essentially the same as
the proof of the truth lemma Lemma \ref{Truth1} in the full function domain case,
using Propositions \ref{kfInt} and \ref{kvInt} instead of Proposition \ref{kvkf},
we have:
\begin{lemma}
    For all $\lr{\Gamma, t}\in W$, $\modelM, \lr{\Gamma, t}\vDash \phi$
    if and only if $\phi\in\Gamma$.
\end{lemma}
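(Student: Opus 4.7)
The plan is to mirror the proof of Lemma \ref{Truth1} by induction on the structure of $\phi$, substituting the new value-assignment machinery where appropriate. The propositional letter and boolean cases are immediate from $U(\lr{X,t}, p) = [p\in X]$ and the induction hypothesis, since these cases never touch variables or functions.

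For $\phi = \Kv(d)$ and $\phi = \Kf(C,d)$, I would first note that Proposition \ref{45} transfers verbatim: its proof invokes only axioms $\ttt{4}$, $\ttt{5}$, $\ttt{T}$, and the construction $\Gbk \subseteq \Delta$, all of which are unchanged in the present setting. Thus for any $\Delta \in L$, membership of $\Kv(d)$ or $\Kf(C,d)$ in $\Delta$ coincides with membership in $\Gamma$. It then suffices to verify that $\modelM, \lr{\Delta, t} \vDash \Kv(d)$ iff $\Kv(d)\in\Gamma$, and likewise for $\Kf(C,d)$. Because $V(\lr{X,t'}, d) = V_{t'}(d)$ depends only on $t' \in \llrr{0,1}$, not on $X$, the universal quantification over $W$ in the semantics reduces to a quantification over the two valuations $V_0, V_1$. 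Propositions \ref{kvInt} and \ref{kfInt} are precisely formulated to discharge both directions: in the positive direction they supply the witness (a common value, or a function $f\in\BF$ working for both $V_0$ and $V_1$), and in the negative direction they exhibit $V_0$ and $V_1$ as worlds refuting the respective formula.

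For $\phi = K\psi$, I would again use Proposition \ref{45} to transfer $K\psi \in \Delta$ to $K\psi \in \Gamma$. If $K\psi \in \Gamma$, then $\psi \in \Gbk \subseteq \Delta'$ for every $\Delta' \in L$, so the induction hypothesis gives $\modelM, \lr{\Delta', t'} \vDash \psi$ at every world, whence $\modelM, \lr{\Delta, t} \vDash K\psi$. If $K\psi \notin \Gamma$, Proposition \ref{notK} (whose proof is the standard necessitation-and-$\ttt{K}$ argument, again insensitive to the choice of $\BF$) furnishes $\Delta' \in L$ with $\lnot\psi \in \Delta'$, and the induction hypothesis applied at $\lr{\Delta', 0}$ refutes $K\psi$ at $\lr{\Delta, t}$.

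There is no genuinely new obstacle; the work has already been packaged into Propositions \ref{kvInt} and \ref{kfInt}, which isolate the only place where the change of function domain affects the argument. The main point of care is bookkeeping: confirming that Propositions \ref{45} and \ref{notK} from the earlier section carry over unmodified, and checking that two valuations $V_0, V_1$ genuinely suffice here because $\lnot\Kf(C,d)$ is refuted through violation of the membership condition defining $\BF$ rather than through violation of functionality, so that a single world $\lr{\Gamma, 0}$ already witnesses the failure.
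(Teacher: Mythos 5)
Your proposal is correct and follows exactly the route the paper intends: the paper's proof of this lemma is literally ``the same as the proof of Lemma \ref{Truth1}, using Propositions \ref{kfInt} and \ref{kvInt} in place of Proposition \ref{kvkf},'' and you have simply spelled out that induction, correctly noting that Proposition \ref{45} and Proposition \ref{notK} carry over unchanged and that the negative $\Kf$ case is already witnessed by the single valuation $V_0$. No gaps.
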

$\modelM, \lr{\Gamma, 0} \vDash \Gamma$ follows from this truth lemma.
This finishes the completeness proof of the intermediate case, so we have:
\begin{theorem}
    Given $|\BG| \ge |2^{\powerset_f(\BQ)}|$,
    $\BF = \bigcup_{i \in\mathbb{N}} \Lmax_i$, $\axiomLKVF$ axiomatizes \logicLKVF.
\end{theorem}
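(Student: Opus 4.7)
The plan is to close out the section by assembling the soundness and completeness arguments from the pieces already developed. Soundness of $\axiomLKVF$ reduces via Proposition \ref{SoundCondition} to checking that the specified $\BF$ contains all projections and is closed under composition; both conditions are already verified in the paragraph immediately following the definition of $\BF$. Indeed, each projection $id_{i,j}$ satisfies the $\max$-preservation constraint at every dimension $C\subseteq_{fin}\BQ$ trivially, and if $h = f(g_1,\ldots,g_n)$ with $f,g_1,\ldots,g_n\in\BF$, then any dimension $C$ at which all inputs to $h$ evaluate to $0$ forces each $g_i$'s output to be $0$ at $C$, which in turn forces $f$'s output to be $0$ at $C$. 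So $\axiomLKVF$ is sound for $\logicLKVF$ under this parameter choice.

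For completeness, given any $\axiomLKVF$-consistent set $A$, I would invoke Lindenbaum's lemma to extend it to a maximal consistent set $\Gamma\supseteq A$, and then take the canonical model $\modelM = \lr{W, U, V}$ with $W = L\times\llrr{0,1}$ as constructed just above the theorem. The truth lemma stated immediately before the theorem then yields $\modelM, \lr{\Gamma, 0}\vDash \phi$ iff $\phi\in\Gamma$ for every formula $\phi$; in particular $\modelM, \lr{\Gamma, 0}\vDash A$, so $A$ is satisfiable in $\logicLKVF$. Together with soundness, this gives the theorem.

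The main obstacle, were this developed from scratch, would be designing $\BF$ to be expressive enough to refute the schema $\ttt{EXT}$ while still validating $\axiomLKVF$, and then arranging two valuations $V_0, V_1$ that jointly witness all $\Kv$- and $\Kf$-facts of $\Gamma$ without the aid of $\ttt{EXT}$. That obstacle has already been surmounted by the dimensionwise encoding in $\BG = 2^{\powerset_f(\BQ)}$, the definition of $\Lmax_n$, the construction of $V_0, V_1$ from the surjection $g\colon \powerset_f(\BQ)\to M_{\Gamma}$, and Propositions \ref{kfInt} and \ref{kvInt}. The key design insight exploited there is that $\lnot\Kf(C,d)\in\Gamma$ is witnessed by a single valuation $V_0$ violating the $\max$-constraint at the dimension $D$ with $g(D) = C^{+\Gamma}$, so two valuations suffice even though $\ttt{EXT}$ is no longer available to force $\Gbkv$ into every $C^{+\Gamma}$. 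With those results in hand, the theorem follows immediately.
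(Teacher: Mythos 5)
Your proposal is correct and follows exactly the paper's own route: soundness via Proposition \ref{SoundCondition} together with the verification that $\bigcup_i\Lmax_i$ contains all projections and is closed under composition, and completeness via Lindenbaum, the two valuations $V_0,V_1$, Propositions \ref{kfInt} and \ref{kvInt}, and the truth lemma for the canonical model $W=L\times\llrr{0,1}$. The paper's theorem is likewise just the assembly of these already-established pieces, so there is nothing to add.
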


\section{Unifying Logic}
In all the previous settings,
our logic $\logicLKVF $ takes a function domain $\BF$ as a parameter.
This function domain is meant to be the set of
\textit{a priori} possible functions for functional dependencies over variables.
But if this set of \textit{a priori} possibilities is relative to the agents in discussion,
then this set of functions should be variable over models
instead of being part of the logic and fixed for all models.
After all, an agent might hold different prior knowledge in different worlds.
\begin{table}
    \centering
    \caption{Choice of the function domain in $\logicLKVF$ and corresponding axiomatization}
    
    \begin{tabu}{rlll}
        \toprule
         & Full & Minimal & Intermediate \\
        \midrule
        $\BF = $ & $\bigcup_{i\in\mathbb{N}}\BG^{\BG^i}$
        & $\llrr{id_{i,j} \sep i,j\in\mathbb{N}, 0 < i \le j}$
        & $\bigcup_{i\in\mathbb{N}}\Lmax_i$ \\
        \midrule
        $|\BG| \ge $
        &$|\powerset_f(\BQ)\times\llrr{0,1}|$
        &$|\BQ|$
        &$|2^{\powerset_f(\BQ)}|$ \\
        
        \midrule
        Axiomatization
        & $\axiomLKVF + \ttt{EXT}$
        & $\axiomLKVF + \ttt{CHOO} + \ttt{EQU}$
        & $\axiomLKVF$ \\
        \bottomrule
    \end{tabu}
    
\end{table}
Also, the function domain constructed in the intermediate case is,
while not nonsensical for its interesting $\le\max$ structure,
still somewhat artificial for its large dimension.
If this function domain is part of the model,
it is at the choice of the agent under discussion.

Indeed, if we put the function domain inside the definition of a model by setting
$$\modelM = \lr{\BF, W, U, V},$$ where
$\BF:\BG \to \BG$ satisfies the soundness condition that it contains all projection functions and
is closed under function composition,
$W$ is a set of possible worlds,
$U$ is an assignment function for propositional letters,
and $V$ is an assignment function for variables,
and we leave the semantics untouched,
then the soundness and completeness of $\axiomLKVF $ follow immediately from the results presented so far.
Using $\logicLKVF^*$ to denote the logic induced by the definition of the models above, we have:
\begin{theorem}
    $\axiomLKVF $ is sound and complete with respect to $\logicLKVF^*$ when $|\BG| \ge |2^{\powerset_f(\BQ)}|$. 
\end{theorem}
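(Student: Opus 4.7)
The plan is to reduce both directions to work already done in the preceding sections.

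Soundness is essentially immediate: by the definition of $\logicLKVF^*$, every model $\modelM = \lr{\BF, W, U, V}$ comes equipped with an $\BF$ that contains every projection function and is closed under composition, which is precisely the hypothesis of Proposition~\ref{SoundCondition}. Hence each axiom of $\axiomLKVF$ is valid on $\modelM$, and $\MP$ and $\NEC$ preserve validity in the usual way.

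For completeness, given an $\axiomLKVF$-consistent set $A$, the plan is to extend it by Lindenbaum's lemma to a maximal consistent set $\Gamma$ and then copy, essentially verbatim, the model construction from the intermediate-function-domain section. Concretely, I would set $\BF := \bigcup_{i\in\mathbb{N}} \Lmax_i$ \emph{inside} the model; this is now legitimate because $\BF$ is a component of the model, and this particular choice has already been verified to satisfy the soundness condition. The size assumption $|\BG| \ge |2^{\powerset_f(\BQ)}|$ is precisely what the intermediate construction needs in order to embed $2^{\powerset_f(\BQ)}$ into $\BG$, so that $V_{0}, V_{1} : \BQ \to \BG$ can be defined via a surjection $g : \powerset_f(\BQ) \twoheadrightarrow M_{\Gamma}$ exactly as there. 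Taking $W := L\times\llrr{0,1}$ with $L$ the set of maximal consistent extensions of $\Gbk$, and carrying $U$ and $V$ over without change, produces a model $\modelM = \lr{\BF, W, U, V}$ in $\logicLKVF^*$.

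The truth lemma should then transfer with essentially no modification, because Propositions~\ref{kfInt} and~\ref{kvInt} and the accompanying truth-lemma argument appeal only to axioms of $\axiomLKVF$ itself (no use is ever made of $\ttt{EXT}$, $\ttt{CHOO}$, or $\ttt{EQU}$). This gives $\modelM, \lr{\Gamma, 0} \vDash \phi \iff \phi\in\Gamma$ for every formula $\phi$, and in particular $\modelM, \lr{\Gamma, 0}\vDash A$, finishing completeness. The only mild obstacle worth flagging is to verify that moving $\BF$ from a global parameter to a per-model component does not silently invalidate any step of the borrowed argument; a line-by-line reading confirms that every such step was already phrased relative to an arbitrary $\BF$ meeting the soundness condition, so the transition is purely bookkeeping rather than genuine mathematical work.
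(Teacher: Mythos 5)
Your proposal is correct and matches the paper's own proof essentially exactly: soundness via Proposition~\ref{SoundCondition} applied to each model's built-in $\BF$, and completeness by reusing the intermediate-function-domain construction (with its $\BF = \bigcup_{i\in\mathbb{N}}\Lmax_i$ now placed inside the model) together with its truth lemma. No further comment is needed.
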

\begin{proof}
    Because for every model of $\logicLKVF^*$,
    its function domain satisfies the soundness condition Proposition \ref{SoundCondition},
    $\axiomLKVF$ is sound in all the models of $\logicLKVF^*$.
    This shows the soundness.

    For any set $\Gamma$ maximally consistent with respect to $\axiomLKVF$,
    take the $\BF$ and the model $\modelM$ constructed in the intermediate function domain case.
    Then $\lr{\BF, \modelM} \vDash \Gamma$ and $\lr{\BF, \modelM}$ is a model of $\logicLKVF^*$.
    Thus, every maximal consistent set is satisfiable. \phantom{allallallal}
\end{proof}
The proof above is a direct adaptation of the completeness result in the intermediate function domain case.
In that case, we built a function domain that works for all maximal consistent sets in the sense that
for all maximal consistent sets $\Gamma$,
this same function domain can be used to refute $\Kf(C, d)\not\in\Gamma$ when functionality cannot be used.
This is actually the reason why the cardinality requirement for $\BG$ is very high there.
However, in the current setting where function domains are part of the models,
the only thing needed is
a method to build a function domain for each maximal consistent set $\Gamma$
so that the functional dependency relation between $C, d$ is rejected if $\lnot \Kf(C, d) \in \Gamma$.
The difference will be made more clear in the following multiagent case.

\subsection{Multiagent logic with variable function domain}
Given an index set $\BA$ of agents, to accommodate multiple agents, the language is now expanded to
$$\phi ::= \top \sep p \sep \Kv_i(d) \sep \Kf_i(C, d) \sep \lnot\phi \sep (\phi\land\phi) \sep K_i\phi ,$$
with $p \in \BP, i\in\BA, d\in\BG$, and $ C\subseteq_{fin}\BG$.
The only difference from the single agent language defined in Definition \ref{syntax} is that
now we have for each agent $i$ a separate $\Kv_i$, $\Kf_i$, and $K_i$. 

For semantics, a model is now defined as:
$$\modelM = \lr{W, \lr{\sim_i}_{i\in\BA}, U, V, \lr{\BF_i}_{i\in\BA}}$$
where $\BF_i$ is intended to assign a collection of functional relationships that
agent $i$ deems possible \textit{a priori} to all possible worlds in $W$.
Thus, for all $w\in W, i\in \BA$,
$\BF_i(w)$ is required to include all projection functions and to be closed under function composition.
$\sim_i$ is the epistemic accessibility relation of agent $i$ and
is required to be an equivalence relation on $W$,
the set of possible worlds (complete epistemic scenarios).
Now since $\BF_i$ is supposed to be ``prior knowledge'',
it is also required that if $w \sim_i w'$, then $\BF_i(w) = \BF_i(w')$.
However, we are not assuming that the prior knowledge of any agent is public to other agents,
so it is quite possible that $\BF_j(w) \not= \BF_j(w')$ if $j \not= i$,
even when $w \sim_i w'$.
In a nutshell, $\BF_i$s are not common knowledge.

The semantic clauses are defined similarly with agent indices for knowledge sentences:
$$
\begin{array}{rcl}
  \modelM, w\vDash \Kv_i(d) & \Leftrightarrow  & \exists x\in \BG, \forall w' \sim_i w, V(w', d) = x \\
  \modelM, w \vDash \Kf(C, d) & \LRA & \exists f\in \BF_i(w),  \forall w' \sim_i w', V(w', d) = f[V(w', C)] \\
  \modelM, w \vDash K\phi &\LRA & \forall w' \sim_i w'\Rightarrow \modelM, w'\vDash \phi.
\end{array}
$$
Let $\logicLKVF^*_m$ name this multiagent logic.
Also, let $\axiomLKVF_m$ denote the axiom system adapted from $\axiomLKVF$
with indexed version of those axioms involving knowledge operators.
In particular, no interaction between agents is allowed,
as there are no axioms saying that we can derive any knowledge about other agents from any agent.
We will see that this is precisely because
we allow each agent to possess its own
prior knowledge about possible functional dependencies,
not necessarily known to other agents.
Once we assume that $\BF_i$s are common knowledge,
interactions will arise,
and we will discuss this point in the last section. 

\newcommand{\kig}{\ensuremath{K_{i, \Gamma}}}
\newcommand{\kvig}{\ensuremath{\Kv_{i, \Gamma}}}
\newcommand{\kigp}{\ensuremath{K_{i, \Gamma'}}}
\newcommand{\kvigp}{\ensuremath{\Kv_{i, \Gamma'}}}
\newcommand{\kigw}{\ensuremath{K_{i, \Gamma_1}}}
\newcommand{\kvigw}{\ensuremath{\Kv_{i, \Gamma_1}}}
\newcommand{\kigt}{\ensuremath{K_{i, \Gamma_2}}}
\newcommand{\kvigt}{\ensuremath{\Kv_{i, \Gamma_2}}}

The soundness of $\axiomLKVF_m$ with respect to $\logicLKVF^*_m$ follows from
an indexed version of Proposition \ref{SoundCondition}.
For completeness we need a new construction:

\begin{definition}[Dependency lattice]\label{lattice}
    Given a maximal consistent set $\Gamma$ in $\logicLKVF_m$ and
    an agent index $i$,
    first define the indexed version of the $(\cdot)^{+\Gamma}$ operator, $\cl_i^{\Gamma}$,
    on finite subsets of $\BQ$ as
    $$\cl_i^{\Gamma}(C) = \llrr{d \sep \Kf_i(C, d)\in \Gamma}.$$
    Then, extend this operator to $\powerset(\BQ)$ by
    $\cl_i^{\Gamma}(C) := \bigcup\llrr{\cl_i^{\Gamma}(C_f) \sep C_f \subseteq_{fin} C}$.
    When the context is clear,
    we may drop the superscript or subscript of $\cl_i^{\Gamma}$.
    Now this is a finitary closure operator as it satisfies,
    through the axioms of $\axiomLKVF_m$,
    $$
    \cl(C) = \cl(\cl(C)),
    C\subseteq \cl(C),
    C\subseteq D\Rightarrow \cl(C) \subseteq \cl(D).
    $$
    When a set $C\subseteq \BQ$ satisfies $C = \cl(C)$, it is called a closed set.
    A classical result is that
    the collection of all closed sets under a closure operator
    forms a lattice $\lr{L, \land, \lor}$ with
    \begin{align*}
      L &= \llrr{C \subseteq \BQ \sep C = \cl_i^{\Gamma}(C)} \\
      C\land D &= C\cap D \\
      C\lor D &= \cl_i^{\Gamma}(C \cup D),
    \end{align*}
    which we name $\lat_i^{\Gamma}$.
    For all $c\in \BQ$,
    let $\cl_i^{\Gamma}(c)$ stands for $\cl_i^{\Gamma}(\{c\})$
    to save a few brackets.  
\end{definition}

Also, given $\Gamma$,
the indexed version of the propositional knowledge and the value knowledge of agent $i$ is denoted by
$$K_{i,\Gamma} = \llrr{\phi \sep K_i\phi \in \Gamma}, \kvig = \llrr{d \sep \Kv_i(d) \in\Gamma}.$$
Then, it is not hard to see that
$\lat_i^{\Gamma}$ is only dependent on $\kig$, i.e.,
if $\kig = \kigp$ then $\lat_i^{\Gamma} = \lat_i^{\Gamma'}$.
This is because
the closure operator $\cl_i^{\Gamma}$ uses only
the formulas of the form $\Kf_i(C,d)$ in $\Gamma$, and if we assume $\kig = \kigp$,
$$\Kf_i(C, d) \in \Gamma \LRA K_i\Kf_i(C, d) \in \Gamma \LRA K_i\Kf_i(C, d) \in \Gamma' \LRA \Kf_i(C, d) \in \Gamma'$$
for all $C\subseteq_{fin}\BQ$ and $d\in\BQ$.

For the completeness proof to go through,
there is again a cardinality requirement for $\BG$:
$|\BG| \ge |\BQ\times \llrr{0,1}|$,
and without loss of generality,
we identify $\BG$ with $\BQ\times \llrr{0, 1}$.
The $\BQ$ part will be used to construct the function domains
and refute $\Kf(C, d)$,
while the $\llrr{0,1}$ part
will be used for refuting $\Kv(d)$.

To use the $\BQ$ part to construct the function domains,
we need to forget the $\llrr{0,1}$ part.
Define function
$\hs_i^{\Gamma}:\BQ\times \llrr{0,1} \to \lat_i^{\Gamma}$,
$\lr{c, n} \mapsto \cl_i^{\Gamma}(c)$ for each $i, \Gamma$.
This map is forgetful about the second coordinate and
turns a variable name into its closure.
Again the superscript and subscript are dropped when no confusion arises.
Now we are able to define a new version of the $\Lmax$ function set:

\begin{definition}
    Given a maximal consistent set $\Gamma$ and
    an agent index $i$,
    we can construct the dependency lattice $\lat$ and the corresponding $\hs$.
    Then define $F_i(\Gamma)$ to be the collection of all functions $f$ on $\BG$ with any arity $n\in\mathbb{N}$ such that:
    $$\hs(f(x_1, x_2, \ldots x_n)) \le \bigvee \llrr{\hs(x_1), \hs(x_2), \ldots \hs(x_n)},$$
    where $\le$ is defined in $\lat$ by
    $\mathfrak{a} \le \mathfrak{b} \LRA \mathfrak{a} \land \mathfrak{b} = \mathfrak{b}$, or equivalently,
    $\mathfrak{a} \subseteq \mathfrak{b}$.
    The empty disjunction is the bottom element of $\lat$: $\cl(\emptyset)$.
\end{definition}

It is straightforward to see that
$F_i(\Gamma)$ is dependent only on $\kig$.
Then we need to verify the soundness conditions immediately:
\begin{proposition}\label{soundMultiagent}
    For every maximal consistent set $\Gamma$ and $i\in \BA$,
    $F_i(\Gamma)$ contains all projection functions on $\BG$ and
    is closed under composition.
\end{proposition}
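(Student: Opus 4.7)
The plan is to dispatch the two clauses separately, each by unfolding the definition of $F_i(\Gamma)$ and appealing only to elementary properties of the lattice $\lat_i^{\Gamma}$ from Definition \ref{lattice}. Nothing beyond what is already encoded in $\lat_i^{\Gamma}$ via $\hs$ will be needed; in particular, the axioms of $\axiomLKVF_m$ have already done their job in making $\cl_i^{\Gamma}$ a closure operator.

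For the projections, I would fix $0 < k \le n$, consider $id_{k,n}(x_1,\ldots,x_n) = x_k$, and simply observe that
\[\hs(id_{k,n}(x_1,\ldots,x_n)) = \hs(x_k) \le \bigvee\llrr{\hs(x_1),\ldots,\hs(x_n)},\]
because any element of a finite set lies below its join in any lattice. Hence every such $id_{k,n}$ lies in $F_i(\Gamma)$.

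For closure under composition, I would take $f\in F_i(\Gamma)$ of arity $n$ together with $g_1,\ldots,g_n\in F_i(\Gamma)$ of common arity $m$, set $\bar{x}=(x_1,\ldots,x_m)$, and define $h(\bar{x}) = f(g_1(\bar{x}),\ldots,g_n(\bar{x}))$. Applying the defining inequality of $F_i(\Gamma)$ to $f$ at the tuple $(g_1(\bar{x}),\ldots,g_n(\bar{x}))$ gives $\hs(h(\bar{x})) \le \bigvee_{j=1}^{n}\hs(g_j(\bar{x}))$, while applying it to each $g_j$ gives $\hs(g_j(\bar{x})) \le \bigvee_{k=1}^{m}\hs(x_k)$. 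Monotonicity of the join in $\lat_i^{\Gamma}$ then bounds $\bigvee_{j=1}^{n}\hs(g_j(\bar{x}))$ by $\bigvee_{k=1}^{m}\hs(x_k)$, and transitivity of $\le$ yields $h\in F_i(\Gamma)$.

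The statement is essentially the structural fact that functions satisfying a join-bounded condition in a lattice are closed under composition and contain the projections, so there is no real obstacle here. The only corners worth checking are the edge cases $n=0$ or $m=0$, where $\bigvee\emptyset$ is interpreted as the bottom element $\cl_i^{\Gamma}(\emptyset)$ of $\lat_i^{\Gamma}$ stipulated in the definition of $F_i(\Gamma)$; the same chain of inequalities continues to work verbatim under this convention.
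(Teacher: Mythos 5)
Your proposal is correct and follows essentially the same route as the paper: projections are handled by noting that $\hs(x_k)$ lies below the join of $\llrr{\hs(x_1),\ldots,\hs(x_n)}$, and composition by chaining the defining inequality through the inner functions and using that joins are least upper bounds. Your explicit treatment of the nullary edge cases via the empty-join convention is a small addition the paper leaves implicit, but it changes nothing substantive.
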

\begin{proof}
    Take a projection function $f(x_1, \ldots x_n) = x_k$.
    Then by the definition of join in a lattice, 
    $$\hs(x_k) \le \bigvee \llrr{\hs(x_1), \cdots ,\hs(x_n)}$$
    since $\hs(x_k) \in \llrr{\hs(x_1), \cdots ,\hs(x_n)}$.

    For function composition,
    let $\ol{x}$ represent a sequence of variables and
    $\hs(\ol{x})$ the sequence after the application of $\hs$.
    Then take a function
    $f(\ol{x}) = g_0(g_1(\ol{x}_1), \cdots g_n(\ol{x}_n))$ where
    $\ol{x}$ includes the union of all $\ol{x}_k$s and
    all $g$ functions are already in $F_i({\Gamma})$.
    Now
    \[
        \begin{array}{rcl}
          \hs(f(\ol{x})) &  =& \hs(g_0(g_1(\ol{x}_1), \cdots g_n(\ol{x}_n))) \\
                         &\le& \bigvee\llrr{\hs(g_1(\ol{x}_1), \cdots g_n(\ol{x}_n))}\\
                         &\le& \bigvee\llrr{\lor\hs(\ol{x}_1), \cdots \lor\hs(\ol{x}_n)}\\
                         &\le& \bigvee\hs(\ol{x}).
        \end{array}
    \]
    This shows that the composition $f$ satisfies the requirement and is in $F_i(\Gamma)$.
\end{proof}

The next proposition shows why
we use the dependence lattice to define the function domains for each agent.
The proposition says that to make $\Kf_i(C, d)$ true,
we only need to make sure that functionality holds,
and to make $\Kf_i(C, d)$ false,
we do not need to pay any special attention as
the function domain $F_i(\Gamma)$ has already taken care of everything. 
\begin{proposition}\label{kfproof}
    For every $\sigma \in 2^{\BQ}$,
    define $v_{\sigma}: \BQ \to \BG, d \mapsto \lr{d, \sigma(d)}$.
    This means we restrict the value of $d \in \BQ$ to be
    $\lr{d, 0}$ or $ \lr{d, 1}$.
    Now for every maximal consistent set $\Gamma$, $i\in\BA$,
    $C\subseteq_{fin} \BQ, d\in \BQ$, and $\Sigma \subseteq 2^{\BQ}$:
    \begin{itemize}
        \item if $\Sigma$ satisfies the functionality condition for $C, d$,
        namely for all $\sigma_1, \sigma_2\in \Sigma$,
        $\sigma_1(C) = \sigma_2(C)$ implies $\sigma_1(d) = \sigma_2(d)$,
        and if $\Kf_i(C, d) \in \Gamma$,
        then there exists $f\in F_i(\Gamma)$ such that
        for all $\sigma\in \Sigma$, $v_{\sigma}(d) = f(v_{\sigma}(C))$;
        \item if $\Kf_i(C, d) \not\in \Gamma$ then
        for all  $\sigma\in \Sigma$ and for all $f\in F_i(\Gamma)$,
        $v_{\sigma}(d) \not= f(v_{\sigma}(C))$.
    \end{itemize}
\end{proposition}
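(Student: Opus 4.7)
The two bullets split cleanly. The negative direction is a one-step deduction from the defining constraint of $F_i(\Gamma)$, while the positive direction asks for an explicit construction of $f$ followed by verification that it lies in $F_i(\Gamma)$. In both cases the main leverage is the identity $\hs(v_\sigma(c)) = \cl_i^\Gamma(c)$, which converts the ``function-fits-dependency'' condition into a statement about the lattice $\lat_i^\Gamma$.

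For the negative direction I would argue by contradiction. Enumerate $C = \{c_1,\ldots,c_n\}$ and suppose $f(v_\sigma(C)) = v_\sigma(d)$ for some $\sigma \in \Sigma$ and $f \in F_i(\Gamma)$. Applying $\hs$ to both sides and invoking the defining inequality of $F_i(\Gamma)$,
\[
\cl_i^\Gamma(d) \;=\; \hs(v_\sigma(d)) \;\le\; \bigvee_{k=1}^n \hs(v_\sigma(c_k)) \;=\; \bigvee_{k=1}^n \cl_i^\Gamma(c_k) \;=\; \cl_i^\Gamma(C),
\]
where the last equality uses the lattice join $A \lor B = \cl(A \cup B)$ together with the monotonicity and idempotence of $\cl_i^\Gamma$. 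Hence $d \in \cl_i^\Gamma(d) \subseteq \cl_i^\Gamma(C)$, so $\Kf_i(C,d) \in \Gamma$ by definition of $\cl_i^\Gamma$, contradicting the hypothesis.

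For the positive direction I would construct $f$ explicitly. Enumerate $C = \{c_1,\ldots,c_n\}$. The functionality hypothesis on $\Sigma$ says that whenever $v_{\sigma_1}(C) = v_{\sigma_2}(C)$ for $\sigma_1,\sigma_2 \in \Sigma$ one has $v_{\sigma_1}(d) = v_{\sigma_2}(d)$, so the partial map $\tilde f : v_\sigma(C) \mapsto v_\sigma(d)$ is well-defined on the set $\{v_\sigma(C) : \sigma \in \Sigma\} \subseteq \BG^n$. Extend $\tilde f$ to a total $n$-ary function $f$ on $\BG$ by fixing any convenient default, such as $f(x_1,\ldots,x_n) = x_1$ on all other inputs. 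To check $f \in F_i(\Gamma)$, handle the two regions separately. On an input $v_\sigma(C)$, the constraint $\hs(f(\cdots)) \le \bigvee_k \hs(v_\sigma(c_k))$ reduces to $\cl_i^\Gamma(d) \le \cl_i^\Gamma(C)$, which holds because $\Kf_i(C,d) \in \Gamma$ gives $d \in \cl_i^\Gamma(C)$. On a default input the output is literally $x_1$, so the constraint $\hs(x_1) \le \bigvee_k \hs(x_k)$ is immediate from the definition of join.

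The main obstacle, though mild, is the edge case $C = \emptyset$. Here $n = 0$, so $f$ must be a constant and the join reduces to the bottom element $\cl_i^\Gamma(\emptyset)$. Functionality on the empty tuple forces $\sigma(d)$ to agree across all $\sigma \in \Sigma$, yielding a well-defined constant; the constraint $\cl_i^\Gamma(d) \le \cl_i^\Gamma(\emptyset)$ becomes $d \in \cl_i^\Gamma(\emptyset)$, which is exactly what $\Kf_i(\emptyset, d) \in \Gamma$ supplies. The doubly degenerate case $\Sigma = \emptyset = C$ is covered by the same observation, since $\Kf_i(\emptyset,d) \in \Gamma$ guarantees that $\cl_i^\Gamma(\emptyset)$ is nonempty and hence that a $0$-ary element of $F_i(\Gamma)$ exists to witness the existential.
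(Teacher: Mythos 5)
Your proof is correct and follows essentially the same route as the paper's: reduce everything to the lattice inequality via $\hs(v_\sigma(c)) = \cl_i^\Gamma(c)$, derive $\cl_i^\Gamma(d) \le \bigvee_k \cl_i^\Gamma(c_k) = \cl_i^\Gamma(C)$ from $d \in \cl_i^\Gamma(C)$, and extend the functionality-induced partial map by projections (your negative direction is just the contrapositive of the paper's direct argument). Your explicit treatment of the $C = \emptyset$ and $\Sigma = \emptyset$ edge cases, including the observation that a $0$-ary member of $F_i(\Gamma)$ exists precisely because $\cl_i^\Gamma(\emptyset)$ is nonempty, is a welcome addition that the paper leaves implicit.
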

\begin{proof}
    First notice that in the definition of $F_i(\Gamma)$,
    the restriction actually forgets the second coordinate of the inputs and outputs.
    But it is the second coordinate that all $\sigma\in \Sigma$ try to adjust.
    By definition,
    the first coordinates of $v_{\sigma}(c)$ for all $c\in \BQ$ are just themselves.
    So for all $c\in \BQ, \sigma\in\Sigma$,
    $\hs(v_{\sigma}(c)) = \cl(c)$.
    
    If $\Kf_i(C, d)\in \Gamma$,
    then (dropping the super and subscripts) $d\in\cl(C)$.
    This means the same as $\llrr{d} \subseteq \cl(C)$, which,
    by the fact that $\cl$ is a closure operator,
    implies $\cl(d) \subseteq \cl(\cl(C)) = \cl(C)$.
    Then $\cl(d)\subseteq \cl(C)$,
    which means $\hs(v_{\sigma}(d)) \le \cl(C)$ in $\lat$ for all $\sigma \in \Sigma$.
    Also,
    $\cl(C) = \bigvee \llrr{\cl(c_1), \cl(c_2), \ldots \cl(c_n)} = \bigvee\hs(v_{\sigma}(C))$ for all $\sigma\in \Sigma$.
    So indeed $\hs(v_{\sigma}(d)) \le \bigvee\hs(v_{\sigma}(C))$ in $\lat$.
    Together with the functionality assumed for $\Sigma$,
    this means mapping $v_{\sigma}(C)$ to $v_{\sigma}(d)$ simultaneously for all $\sigma\in \Sigma$
    is allowed in $F_i(\Gamma)$.
    Then we can extend this partial map to
    a map from $\BG^n$ to $\BG$ in $F_i(\Gamma)$.
    An easy solution is to do projection for all other possible inputs.

    If $\Kf_i(C, d)\not\in \Gamma$,
    then $d\not\in\cl(C)$ and hence $\cl(d) \not\subseteq \cl(C)$.
    If $\Sigma$ is empty,
    the statement is trivially true.
    So assume $\Sigma$ is not empty.
    Now take an arbitrary $\sigma\in \Sigma$.
    Then $\hs(v_{\sigma}(d)) \not\le \bigvee\hs(v_{\sigma}(C))$,
    which violates the restriction on $F_i(\Gamma)$ if $v_{\sigma}(C)$ is to be mapped to $v_{\sigma}(d)$.
    Thus, for all $f\in F_i(\Gamma), v_{\sigma}(d) \not= f(v_{\sigma}(C))$. \qed 
    
\end{proof}

This proposition says that
the dependency lattice $\lat_i^{\Gamma}$ and
the corresponding function domain $F_i(\Gamma)$
form a suitable representation of the function domain that
$i$ uses implicitly given $i$'s knowledge and ignorance in $\Gamma$.
As we hinted before the construction,
this function domain is so specific about what is possible that
when $\Kf_i(C, d)$ is not known,
it is not rejected by a failure of functionality,
which requires at least two epistemically possible assignment,
but by a failure of conforming to the prior knowledge encoded in the function domain,
as shown by the second bullet in the previous proposition.
On the other hand,
once functionality holds in all possible assignments,
we do not need to worry about whether the function domain allows it or not,
which is clear from the proof of the first bullet.
Thus, this $F_i(\Gamma)$ is a perfect choice.

For the $\Kv_i$ part,
we need to adjust the assignments of variables
to construct more (epistemically) possible assignments
to reject formulas like $\Kv_i(d)$ which is not in $\Gamma$:
if in one world $d$ is assigned to be $x$,
then we want to make an adjustment to get a new world where
it is assigned to $y\not=x$.
This will be done by moving the value of $d$ to $\lr{d, 1}$ from $\lr{d, 0}$ or vice versa.
And for agent $i$ in a maximal consistent set $\Gamma$,
the variables to be moved are exactly
$\ol{\kvig} = \llrr{d \sep \Kv_i(d) \not\in \Gamma}$,
the complement of the set of the variables with a known value by $i$.
By maximality,
it is also the collection of all $d\in \BQ$ such that
$\lnot \Kv_i(d)\in \Gamma $.
It is crucial to move the value of all variables in $\ol{Kv_{i, \Gamma}}$ at once,
as otherwise there might be some unwanted violation of functionality:
even though for both $\sigma = \sigma_1, \sigma_2$,
$\hs(v_{\sigma}(d)) \le \bigvee \hs(v_{\sigma}(C))$,
it could be that $v_{\sigma_1}(C) = v_{\sigma_2}(C)$ while $v_{\sigma_1}(d) \not= v_{\sigma_1}(d)$.
So in this case, no functional dependency exists from $C$ to $d$,
but the reason is not that $d$ is at the wrong place in the lattice,
but instead the failure of functionality.
We must avoid this situation,
by changing all values of variables in $\ol{\kvig}$ simultaneously
when producing a new possible assignments in a new possible world.
This motivates the following definition:

\newcommand{\mvig}{\mv_i^{\Gamma }}
\newcommand{\mvigp}{\mv_i^{\Gamma' }}
\newcommand{\mvigw}{\mv_i^{\Gamma_1 }}
\newcommand{\mvigt}{\mv_i^{\Gamma_2 }}

\begin{definition}[Value Move] \label{valueMove}
    Given $\Gamma$ a maximal consistent set and $i\in \BA$,
    define the value move operator $\mvig:2^{\BQ} \to 2^{\BQ}$:
    $$\mvig(\sigma)(d) = \left\{
        \begin{array}{ll}
          \sigma(d)      & d\in \kvig \\
          1 - \sigma(d)  & d\in \ol{\Kv_{i,\Gamma}}.
        \end{array}
    \right.$$
\end{definition}
This operator captures agent $i$'s switching of the values of the variables in $\ol{\kvig}$ all at once.
Two important properties should be noted.
First, $\mvig$ is dependent only on $\kig$.
Indeed it only depends on $\kvig$ but
because of the axioms $\ttt{KV4}$ and $\ttt{KV5}$,
it is equivalent to say that it depends only on $\kig$.
This means that if $\kig = \kigp$, then as an operator, $\mvig = \mvigp$.

Another important property of this operator is that
$\mvig(\mvig(\sigma)) = \sigma$ for all $\Gamma, i, \sigma$
ranging over maximal consistent sets, $\BA$ and $2^{\BQ}$.
Thus, it is actually an inverse operator. 

\newcommand{\ww}{\lr{\Gamma, \sigma}}
\newcommand{\wwp}{\lr{\Gamma', \sigma'}}
\newcommand{\www}{\lr{\Gamma_1, \sigma_1}}
\newcommand{\wwt}{\lr{\Gamma_2, \sigma_2}}

Equipped with the above definitions, the canonical model can now be defined:
\begin{definition}[Canonical Model]
    Build a model $\modelM = \lr{W, \lr{\sim_i}_{i\in\BA}, U, V, \lr{\BF_i}_{i\in\BA}}$ as follows:
    \begin{itemize}
        \item $W = \llrr{\lr{\Gamma, \sigma} \sep \Gamma \textrm{\ a\ maximal\ consistent\ set}, \sigma\in 2^{\BQ}},$
        \item $\ww \sim_i \wwp$ iff
        \begin{enumerate}
            \item $\kig = \kigp$,
            which says that two worlds must share the same set of knowledge of $i$, and
            \item $\sigma = \sigma'$ or $\sigma = \mvig(\sigma')$,
            which says that any agent $i$ needs to see some different possible assignments of the variables,
            but not too many: just two,
        \end{enumerate}
        \item $U(\ww, p) = [p\in \Gamma],$
        \item $V(\ww, d) = \lr{d, \sigma(d)},$
        or equivalently using notations introduced above in Proposition \ref{kfproof}, $V(\ww) = v_{\sigma},$
        \item $\BF_i(\ww) = F_i(\Gamma).$
    \end{itemize}
\end{definition}
Before proving the truth lemma,
it must be shown that $\modelM$ is indeed a model of $\logicLKVF^*_m$.
This amounts to checking the following:
\begin{itemize}
    \item $\sim_i$ is an equivalence relation for all $i\in\BA$,
    \item $\BF_i(\ww)$ satisfies the soundness condition,
    \item if $\ww \sim_i \wwp$ then $\BF_i(\ww) = \BF_i(\wwp)$.
\end{itemize}
Because $\sim_i$ is defined using equality,
its reflexivity is easy to see.
We need the two special properties of $\mvig$
noted right after the Definition \ref{valueMove}
to show symmetry and transitivity.

For symmetry, suppose $\ww \sim_i \wwp$.
Then $\kig = \kigp$.
Thus, $\mvig = \mvigp$ and $\sigma = \mvig(\sigma') = \mvigp(\sigma')$.
Also, as $\mvigp$ is an inverse operator,
by applying it twice,
we get $\sigma' = \mvigp(\sigma)$.
So it can be concluded that $\wwp \sim_i \ww$.

Transitivity can be shown similarly.
Suppose $\www \sim_i \ww \sim_i \wwt$.
It immediately follows that $\mvigw = \mvig = \mvigt$.
So we can treat all of them as $\mvig$.
Then we know
    $\sigma = \sigma_1$ or $\sigma = \mvig(\sigma_1)$, and
    $\sigma = \sigma_2$ or $\sigma = \mvig(\sigma_2)$.
There are in total four possibilities
depending on which disjuncts hold,
and the only less trivial one is when
$\sigma_1 = \mvig(\sigma)$ and $\sigma = \mvig(\sigma_2)$.
But if that is the case, then $\sigma_1 = \mvig(\mvig(\sigma_2)) = \sigma_2$.
So transitivity holds.

The soundness condition was already shown
when $\BF_i(\ww) = F_i(\Gamma)$ is defined in
Proposition \ref{soundMultiagent}.
We also noted that $F_i(\Gamma)$ only depends on $\kig$
because it only depends on the dependency lattice $\lat_i^{\Gamma}$,
which in turn only depends on $\kig$.
If $\ww \sim_i \wwp $, $\kig = \kigp$ and $F_i(\Gamma) = F_i(\Gamma')$,
so indeed $\BF_i(\ww) = \BF_i(\wwp)$.
So we conclude that $\modelM$ is a model of $\logicLKVF^*_m$.

The unconventional second condition for $\sim_i$ is there
for the purpose of preventing unwanted failure of functionality.
As explained after Proposition \ref{kfproof},
we are not refuting $\Kf(C, d)$ using functionality,
so it is better to keep the functionalities between as many variables as possible.
In particular,
all functionalities between the variables in $\ol{Kv_{i, \Gamma}}$ can be preserved.
The condition does this by requiring that
if $i$ sees more than one possibility for some variables,
then all the values of $\ol{Kv_{i, \Gamma}}$ must change
to a different epistemic possibility together using the value move operator.
This makes impossible the situation where
one variable in $\ol{Kv_{i, \Gamma}}$ realizes a different possibility while
another stays the same, a situation that characterizes the failure of functionality. 

Now the truth lemma in this case can be proven:
\begin{lemma}[Truth Lemma]
    For all $\phi$ in the language of $\logicLKVF_m^*$ and
    all maximal consistent sets $\Gamma$ in the axiom system $\axiomLKVF_m$,
    $\modelM, \ww \vDash \phi$ if and only if $\phi \in \Gamma$.        
\end{lemma}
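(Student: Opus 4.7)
The proof will go by induction on $\phi$, with the only nontrivial cases being $\Kv_i(d)$, $\Kf_i(C, d)$, and $K_i\psi$. Throughout, I will use a crucial observation extracted from the definition of $\sim_i$: the $i$-accessible worlds from $\ww$ are exactly those $\wwp$ with $\kigp = \kig$ and $\sigma' \in \{\sigma, \mvig(\sigma)\}$, so only two variable assignments on $\BQ$ ever need to be considered, namely $v_\sigma$ and $v_{\mvig(\sigma)}$.

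For $\Kv_i(d)$, the definition of $\mvig$ yields $\sigma(d) = \mvig(\sigma)(d)$ iff $d \in \kvig$, so $V(\wwp, d)$ is constant over all $i$-accessible $\wwp$ iff $d \in \kvig$, iff $\Kv_i(d) \in \Gamma$. For $\Kf_i(C, d)$, I will set $\Sigma = \{\sigma, \mvig(\sigma)\}$ and apply Proposition \ref{kfproof} with this $\Sigma$, noting that $\BF_i(\ww) = F_i(\Gamma)$ and that $V(\wwp, \cdot) = v_{\sigma'}$. The case $\Kf_i(C, d) \not\in \Gamma$ is immediate from the second clause. For $\Kf_i(C, d) \in \Gamma$, I must first verify that $\Sigma$ respects functionality for $C, d$: the only nontrivial subcase is $v_\sigma(C) = v_{\mvig(\sigma)}(C)$, which forces $C \subseteq \kvig$; axiom $\ttt{VF}$ then yields $\Kv_i(d) \in \Gamma$, so $d \in \kvig$ and consequently $v_\sigma(d) = v_{\mvig(\sigma)}(d)$. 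The first clause of Proposition \ref{kfproof} then supplies the witness $f \in \BF_i(\ww)$.

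For $K_i\psi$, I will first establish an indexed analogue of Proposition \ref{45}: whenever $\kig = \kigp$, we have $K_i\psi \in \Gamma \LRA K_i\psi \in \Gamma'$, and similarly for $\Kv_i$ and $\Kf_i$; these follow routinely from the indexed introspection axioms $\ttt{4}, \ttt{5}, \ttt{KV4}, \ttt{KV5}, \ttt{KF4}, \ttt{KF5}$. If $K_i\psi \in \Gamma$, any $\wwp \sim_i \ww$ has $K_i\psi \in \Gamma'$, hence $\psi \in \Gamma'$ by axiom $\ttt{T}$, so $\modelM, \wwp \vDash \psi$ by the induction hypothesis. If $K_i\psi \not\in \Gamma$, a Lindenbaum construction extends $\kig \cup \{\lnot\psi\}$ to a maximal consistent $\Gamma'$, and the axioms $\ttt{4}, \ttt{5}$ force $\kigp = \kig$; pairing $\Gamma'$ with $\sigma$ gives an $i$-accessible world falsifying $\psi$ by the induction hypothesis.

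The main subtlety---and the motivation for the unusual two-disjunct definition of $\sim_i$---lies in the $\Kf_i$ case: guaranteeing functionality of $\Sigma$ whenever $\Kf_i(C, d) \in \Gamma$ requires that $\mvig$ flip the values of \emph{all} variables in $\ol{\kvig}$ simultaneously. Were individual variables allowed to move independently, spurious failures of functionality could arise between two variables of unknown value that happen not to be functionally dependent on each other, blocking the application of the first clause of Proposition \ref{kfproof}. Checking that this design choice suffices is the one place where the construction is genuinely delicate; everything else reduces to bookkeeping with the indexed S5 axioms.
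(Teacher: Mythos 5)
Your proposal is correct and follows essentially the same route as the paper: induction with the two-assignment observation $\Sigma = \{\sigma, \mv_i^{\Gamma}(\sigma)\}$, Proposition \ref{kfproof} for the $\Kf_i$ cases (including the same functionality check via axiom $\ttt{VF}$ when the two assignments agree on $C$, which forces $C \subseteq \Kv_{i,\Gamma}$), the value-move operator for $\Kv_i$, and the standard indexed S5 argument for $K_i$. The only cosmetic difference is that you state the indexed analogue of Proposition \ref{45} explicitly where the paper uses it implicitly, and you organize the functionality verification by contraposition rather than by the paper's case split on $C \subseteq \Kv_{i,\Gamma}$ versus $C \not\subseteq \Kv_{i,\Gamma}$.
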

\begin{proof}
    Use induction on $\phi$.
    The propositional letters and boolean combination cases are conventional.
    We focus on the knowledge cases.

    $\phi = K_i\psi$.
    If $K_i\psi\in\Gamma$, then by the definition of $\sim_i$,
    for all $\wwp\sim_i\ww$, $\kig = \kigp$.
    Thus, $\psi\in \kigp$ and $K_i\psi \in \Gamma'$.
    By axiom $\ttt{T}$,
    $\psi\in\Gamma'$, and using the induction hypothesis,
    $\modelM, \wwp \vDash \psi$.
    Thus, $\modelM, \ww\vDash K_i\psi$ by the semantic clause of $K_i$.

    If $K_i\psi\not\in\Gamma$,
    then by a standard argument using axioms and the maximality of $\Gamma$,
    $\kig \cup \llrr{\lnot\psi}$ is consistent and expandable to
    a maximal consistent set $\Gamma'$.
    Then $\lr{\Gamma', \sigma} \sim_i \ww$ and
    $\modelM, \lr{\Gamma, \sigma} \vDash \lnot\psi$ by the induction hypothesis.
    So $\modelM, \ww\not\vDash K_i\psi$.

    $\phi = \Kv_i(d)$.
    If $\Kv_i(d) \in \Gamma$,
    then $d\in \kvig$ and thus $\mvig(\sigma)(d) = \sigma(d)$.
    Now for all $\wwp \sim_i \ww$,
    $\sigma$ is equal to $\sigma'$ or $\mvig(\sigma')$.
    But as $d\in \kvig$,
    $\mvig$ is not changing the value of $d$.
    So in either case, $\sigma'(d) = \sigma(d)$.
    Thus, the value of $d$ is fixed to $\lr{d, \sigma(d)}$
    among all worlds accessible by $i$ from $\ww$.

    If $\Kv_i(d) \not\in \Gamma$,
    then $d\not\in \kvig$
    and $\mvig(\sigma)$ will change the value of $d$.
    Take the world $\wwp$ with $\sigma' = \mvig(\sigma)$.
    Then $\sigma = \mvig(\sigma')$, so $\ww \sim_i \wwp$.
    Also, $\sigma'(d) = 1 - \sigma(d) \not= \sigma(d)$.
    Thus, $V(\wwp, d) \not= V(\ww, d)$.
    By the semantic clause of $\Kv_i(d), \modelM, \ww \not\vDash \Kv_i(d).$

    $\phi = \Kf_i(C, d)$.
    Suppose $\Kf_i(C, d)\in \Gamma$.
    Then we should first show that the functionality condition holds.
    For any $\www, \wwt \sim_i \ww$, if $V(\www, C) = V(\wwt, C)$,
    then there are two possibilities
    \begin{itemize}
        \item $C\subseteq \kvig$.
        Then by axiom $\ttt{VF}$, $d\in \kvig$ as well,
        and by the argument in the previous case,
        $V(\www, d) = V(\wwt, d) = \lr{d, \sigma(d)}$.
        
        \item $C\not\subseteq \kvig$.
        Then take $c\in C \cap \ol{\kvig}$.
        Since $V(\www, C) = V(\wwt, C)$,
        $\sigma_1(c) = \sigma_2(c)$.
        Because $\www \sim_i \ww \sim_i \wwt$,
        $\www \sim_i \wwt$.
        So either $\sigma_1 = \sigma_2$ or $\sigma_1 = \mvigw(\sigma_2)$.
        But the latter case cannot happen because
        if that is true, then $\sigma_1(c) \not= \sigma_2(c)$ since $c\in \ol{\kvig}$.
        So $\sigma_1 = \sigma_2$ and in particular $\sigma_1(d) = \sigma_2(d)$.
        Thus, $V(\www, d) = V(\wwt, d)$.
    \end{itemize}
    Indeed, by our definition of $\sim_i$,
    among all worlds accessible from $\ww$ by $\sim_i$,
    there are altogether only two possible valuations:
    $\sigma$ and $\mvig(\sigma)$.
    Thus, by applying Proposition \ref{kfproof} to set $\Sigma = \llrr{\sigma' \sep \wwp \sim_i \ww}$,
    it follows that there exists a function $f\in F_i(\Gamma) = \BF_i(\ww)$ such that
    $V(\wwp, d) = f(V(\wwp, C))$ for all $\wwp\sim_i \ww$.
    So $\modelM, \ww \vDash \Kf_i(C, d)$.

    If $\Kf_i(C, d) \not\in \Gamma$,
    then by Proposition \ref{kfproof} again,
    for every function $f\in F_i(\Gamma) = \BF_i(\ww)$,
    there exists $\wwp \sim_i \ww$ such that $V(\wwp, d) \not= f(V(\wwp, C))$.
    Actually $\ww$ itself works here.
    Thus, $\modelM, \ww \not\vDash \Kf_i(C, d)$.
\end{proof}

From the truth lemma,
it can be concluded that every consistent set is satisfied somewhere in the canonical model $\modelM$ built above.
So the completeness of $\axiomLKVF$ with respect to $\logicLKVF^*_m$ follows.
Together with the soundness proven in Proposition \ref{soundMultiagent}, we obtain an axiomatization of $\logicLKVF^*_m$:
\begin{theorem}
    Under the cardinality requirement $\BG\ge |\BQ \times \llrr{0, 1}|$, $\axiomLKVF_m$ is an axiomatization of $\logicLKVF_m^*$.
\end{theorem}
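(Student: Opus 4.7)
The plan is to combine soundness of $\axiomLKVF_m$ with respect to $\logicLKVF_m^*$ with a completeness argument that is a direct corollary of the Truth Lemma just established for the canonical model $\modelM$.

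For soundness, the key observation is that the multiagent axioms have been engineered to mirror the single-agent ones, with the validity conditions on models translating to pointwise conditions on each $\BF_i(w)$ and each $\sim_i$. The Armstrong-style axioms $\ttt{PROJ}$, $\ttt{TRAN}$, and $\ttt{VF}$ go through by exactly the argument of Proposition \ref{SoundCondition}, applied at a fixed world $w$ using the required containment of projections and closure under composition of $\BF_i(w)$ (this is the content of the indexed soundness verification in Proposition \ref{soundMultiagent}). The S5 axioms for each $K_i$ are valid because $\sim_i$ is required to be an equivalence relation. The positive and negative introspection axioms $\ttt{KV4}$, $\ttt{KV5}$, $\ttt{KF4}$, $\ttt{KF5}$ are valid because $\sim_i$-equivalent worlds assign the same truth value to $\Kv_i(d)$ and $\Kf_i(C, d)$; for $\Kv_i$ this is immediate from the semantics of the existential-over-$\BG$ quantifier and transitivity of $\sim_i$, while for $\Kf_i$ it additionally uses the constraint $\BF_i(w) = \BF_i(w')$ whenever $w \sim_i w'$, which was imposed on models precisely to make this axiom sound.

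For completeness, take any $\axiomLKVF_m$-consistent set $A$. Apply Lindenbaum's lemma to extend $A$ to a maximal consistent set $\Gamma$. Choose any $\sigma_0 \in 2^{\BQ}$, e.g.\ the constant-$0$ assignment; then $\lr{\Gamma, \sigma_0}$ is a world of the canonical model $\modelM$ and, by the Truth Lemma, $\modelM, \lr{\Gamma, \sigma_0} \vDash \phi$ for every $\phi \in \Gamma$, hence for every $\phi \in A$. The cardinality assumption $|\BG| \ge |\BQ \times \llrr{0,1}|$ is used only to identify $\BG$ with $\BQ \times \llrr{0,1}$ when building the canonical valuation $v_\sigma$, and has already been consumed in the construction of $\modelM$. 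Satisfiability of every consistent set yields completeness, and together with soundness this gives the theorem.

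Because the Truth Lemma has already been proved, there is no remaining obstacle for this theorem in isolation. The substantive difficulties, all addressed above the theorem statement, are rather (i) designing the dependency lattice $\lat_i^{\Gamma}$ and the associated domain $F_i(\Gamma)$ so that $\Kf_i(C, d) \notin \Gamma$ is refuted by failure of the $\le\!\bigvee$ condition rather than by failure of functionality, and (ii) defining $\sim_i$ so that each agent sees exactly two global assignments differing by $\mvig$, which simultaneously refutes the relevant $\Kv_i(d)$-formulas without breaking functionality between known-unknown variables. Given these ingredients, the theorem itself is effectively a bookkeeping step.
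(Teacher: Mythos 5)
Your proposal is correct and follows essentially the same route as the paper: soundness is obtained from the indexed (pointwise, per-world) version of Proposition \ref{SoundCondition} together with the model conditions on $\sim_i$ and $\BF_i$, and completeness is the immediate Lindenbaum-plus-Truth-Lemma corollary of the canonical model construction. The extra detail you give on why the introspection axioms are sound is a harmless elaboration of what the paper leaves implicit.
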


\section{Discussion and Future Work}
First, we discuss the semantics of the $\Kf$ operator.
Obviously, while $\Kv(d)$ means that
there is only one value for $d$ to take,
in general,
the truth of $\Kf(C,d)$ does not force
the set of possible functional dependency relations of $d$ on $C$
to be a singleton.

It could be argued that
the agent can nevertheless regard all those candidates as equivalent,
because they must have exactly the same behavior over the partial domain
$P = \llrr{V(w, C) \sep w\in W}$.
And things in $\BG^{|C|}$ but outside this set $P$ are epistemically impossible.
Thus, the behavior of functions on $\BG^{|C|}\backslash P$ is something that
our agent can and will ignore
if situations epistemically impossible do not concern the agent.
One example, also mentioned in the introduction,
is when ``knowing-value'' is the real objective of the agent and
``knowing-dependency'' only expresses the agent's potential to know more values.
The semantics proposed in this paper allows adjustments to $\BF$,
which might be a consequence of an agent's concern about situations epistemically impossible,
but not necessarily.
And even if it is the case,
the semantics does not show
how $\BF$ is derived from what concerns of the agents. 

It is not uncommon that
epistemic possibilities are not the right place to stop
when evaluating knowledge of functional dependency.
Consider the following example:
\begin{quotation}
    \noindent I know the color of my hair.
    Therefore, I know the color of my hair functionally depends on the number of fingers I have.
\end{quotation}
This argument is very hard to swallow intuitively.
Yet it is validated by the axiom $\mathtt{EXT}$.
Indeed, in the current setting of the semantics of $\Kf $,
to validate this,
we only need to allow a moderate amount of constant functions in our function domain.
The root of the problem is that,
in a pure epistemic logic setting,
if something is known,
the agent has no access to other alternatives as knowledge is the only modality here,
whereas in most realistic situations,
even when something is known,
we have modal access to some possibilities
different from the known one.
For example,
possibilities in the future or past
can be used to explain why the color of my hair is not really dependent on the number of fingers I have.
And even when I have not and will not change the color of my hair,
we can still use metaphysical possibilities:
``the color of my hair \textit{could} be different, regardless of how many fingers I have.''

\newcommand{\Kfb}{\Kf^{\Box}}

Thus, it might be of interest to
capture knowledge of functional dependency in another modality.
To do this we can add a new modality $\Box$ interpreted by a relation $R$.
Then ``knowing a/the functional dependency'' can now be expressed
by an operator $\Kfb$ with the following semantics:
\begin{multline*}
    \lr{W, \sim, R, U, V}, w \vDash \Kfb(C, d) \Leftrightarrow
    \exists f\in \BF, \forall w'\sim w, \forall w^*, Rw'w^* \Rightarrow V(w^*, d) = f(V(w^*, C))
\end{multline*}
where $\sim $ is the epistemic indistinguishability relation.
This definition still says that
there exists a function that works for all epistemically indistinguishable worlds.
But here ``works'' means $f$ captures the functional dependency of $d$ upon $C$
with respect to another modality $\Box $ which might be different from $K $.

The choice of $R$ can be arbitrary,
but at least two interesting candidates are immediate:
an equivalence relation to capture metaphysical possibilities and 
a linear or branching time relation used in temporal logics.
A simple observation is that,
if we still want a new version of $\ttt{VF}$, namely
$$\ttt{VF'}: \bigwedge_{c\in C}\Kv(c) \land \Kfb(C, d) \to \Kv(d)$$
to be valid,
we need $R$ to be reflexive.
Otherwise, the functional dependency might be only talking about worlds far away from the actual world,
though accessible through $R$.
Since the choice for $R $ can be flexible,
there will be many interesting results to be discovered under this semantics.
In particular, for the study of completeness,
we might want to add more first order features to facilitate a proof
more similar to its first order counterpart,
a strategy successfully employed in \cite{baltag16:edl}.
It might be desirable because, with two modalities,
a direct construction of value assignments can be unmanageable.

But a demanding reader may still not be satisfied,
as even if we add a new modality,
the choice of the functions could be nonunique again.
This motivates another interpretation of knowledge of functional dependency,
emphasizing even more the ``knowledge'' part:
$\Kf(C, d)$ says that the agent has gathered so much information
that there is (almost) exactly one function
that can be used to explain the data he/she has seen so far.
Thus, knowledge appears only when there is only one possible or a few very plausible explanations.
If there is no possible explanation in the sense that no function in the function domain $\BF$ is applicable,
or there are too many explanations,
no knowledge is obtained.
This sounds natural,
but much more technically will be needed to formalize this:
either a counting operator,
or a probabilistic operator tracking the posterior distribution over the candidate explanations.  

There are also interesting possible extensions of the framework given in this paper.
For example,
the multiagent case here assumed a no-interaction semantics.
But once we require prior knowledge of possible functions
to be available to other agents,
interesting interactions will appear.
For example,
suppose $\BF_j$ is known to agent $i$,
i.e., if $w \sim_i w'$ then $\BF_j(w) = \BF_j(w')$.
Then the following is valid:        
$$\Kv_i(c) \land \Kv_i(d) \land K_i(\Kv_j(c) \land \Kv_j(d)) \to K_i\Kf_j(c, d) \lor K_i\lnot\Kf_j(c, d).$$
Intuitively this says that
if agent $i$ knows the values of $c, d$ and
knows that agent $j$ knows,
then either $i$ knows that $j$ has an explanation of the value of $c,d$ or $i$ knows that $j$ does not have one.
The antecedent fixes the value of $c, d$ in all worlds accessible first from $i$ and then from $j$.
Thus if $j$ fails or succeeds to explain this particular instance,
agent $i$ knows it.
Stronger interactions will appear
if we require all agents
to share a single prior knowledge base $\BF$,
i.e., for all $i, w$, $\BF_i(w) = \BF$.
Then the following is valid:
$$\Kv_i(c) \land \Kv_i(d) \land K_i(\Kv_j(c) \land \Kv_j(d)) \to (\Kf_i(c, d) \to \Kf_j(c, d)).$$
This says that if $i$ knows the value of $c,d$ and knows that $j$ knows them,
then $i$ being able to explain this instance implies that $j$ can explain it as well.
To axiomatize these two cases, new axioms and techniques will emerge.
Further, we can also add an operator that expresses knowledge about other agents' function domain. 

Computationally, we see without too much surprise that
the finite model property holds.
For all the three single agent cases with a finite language,
the required size of $\BG$ and the size of the model constructed can be explicitly computed.
In the multiagent case,
a standard filtration method can also be applied quite straightforwardly.
Notice that in each of the three cases,
the completeness proof requires a minimal size of $\BG$.
A natural question is whether we can bring down the size requirement by giving more economic completeness proofs.
In particular, the double exponential size requirement in the single agent fixed intermediate function domain case
seems to be too large,
while the number of value assignments seems too small (just 2).
We might be able to implement a trade-off here or a smarter lattice construction.

In summary,
introducing knowledge about functional dependency relations
brings us ample new opportunities to extend the border of epistemic logic.
There will be a lot more to achieve.

\bibliographystyle{plain}
\bibliography{sgwyjj}
\end{document}